%% file: main.tex
\pdfoutput=1
\documentclass{article}

\input{preamble/preamble.tex}
\input{preamble/preamble_math}
\input{preamble/definitions_basic}

\input{preamble/minimalist_biblatex}

\crefformat{equation}{(#2#1#3)}
\crefformat{figure}{Figure~#2#1#3}
\crefname{example}{Example}{Examples}
\crefname{lemma}{Lemma}{Lemmas}
\crefname{cor}{Corollary}{Corollaries}
\crefname{theorem}{Theorem}{Theorems}
\crefname{assumption}{Assumption}{Assumptions}

\crefname{assumption}{Assumption}{Assumptions}

\usepackage{enumitem} 
\usepackage[separate-uncertainty=true,multi-part-units=single]{siunitx} 

\declaretheoremstyle[
spacebelow=\parsep, spaceabove=\parsep, mdframed={backgroundcolor=gray!10!white, hidealllines=true, innertopmargin=8pt, innerbottommargin=4pt, skipabove=8pt, skipbelow=10pt, nobreak=true } ]{grayboxed}

\crefname{gassumption}{Assumption}{Assumptions}

\usepackage{thm-restate}

\usepackage{xcolor}
\usepackage{bm}

\usepackage[affil-it]{authblk}

\newcommand{\pos}[1]{\bm{#1}}
\newcommand{\neu}[1]{\textcolor{gray}{#1}}
\newcommand{\nege}[1]{\textbf{\textcolor{red}{#1}}}

\title{Anchor Divergence for Semantic Geometry in Contrastive Learning}

\date{}
\author[1]{Akash Kannan}
\author[1]{Kiho Park}
\author[1]{Victor Veitch}
\affil[1]{University of Chicago}

\begin{document}
\maketitle

\begin{abstract}
	This paper concerns how semantic context determines geometry in learned vector representations. Similarity is typically measured using cosine similarity, which provides a single fixed geometry. Semantic similarity, however, is inherently context dependent: two images may be similar because they depict the same object, share a visual style, or are relevant to the same clinical finding.
	We show that contrastive representations naturally encompass a family of geometries that can be specialized to particular semantic structure.
	The key idea is to use an interplay between contrastive learning, exponential families, and information geometry to establish a correspondence between probability distributions over ``anchors'' and Bregman geometries on the representation space.
	We use this correspondence to define \emph{Anchor Divergences}, a method for specifying context-specific semantic geometries on fixed representations. Under this correspondence, modeling the anchor distribution models the geometry itself. 
	Experiments on retrieval show that anchor divergences provide an effective and efficient way to specify context-specific semantic similarity. Code is available at \href{https://github.com/sky1712/Anchor-Divergence}{github.com/sky1712/Anchor-Divergence}.
\end{abstract}

\section{Introduction}

Modern AI systems encode semantic structure in dense vector representations. 
This paper asks how similarity between such representations should be measured. 
We focus on contrastive models such as CLIP and its successors \citep{radford2021learning}, which are trained to score matched query--key pairs above mismatched ones using cosine-based softmax objectives. 
It is therefore natural to use cosine similarity as the geometry of the learned space. But this creates a basic mismatch: cosine provides a single fixed notion of similarity, while semantic similarity is inherently context dependent.

For concreteness, we focus on image similarity in vision--language models, and study how semantic similarity should be defined between pairs of images. \Cref{fig:intro_semantic_neighborhoods} illustrates the fundamental limitation of cosine similarity in this setting. A query image may share different semantic attributes with different database images, while cosine implicitly fixes how these sources of variation contribute to similarity. The relevant notion, however, depends on context: we may care about the same dog breed, background, or outfit. 
Understanding how to specify such context-specific similarity is important both foundationally---the question of how dense representations encode semantics is the heart of (mechanistic) interpretability---and practically. For example, when retrieving chest X-rays similar to a query image, we may want similarity to reflect specific pathologies rather than unrelated patient characteristics or acquisition artifacts; see \Cref{fig:chexpert_qualitative}. A fixed cosine geometry has no reason to privilege the semantic aspect relevant to the task.

\begin{figure}[t]
    \centering
    \includegraphics[
        width=0.86\linewidth
    ]{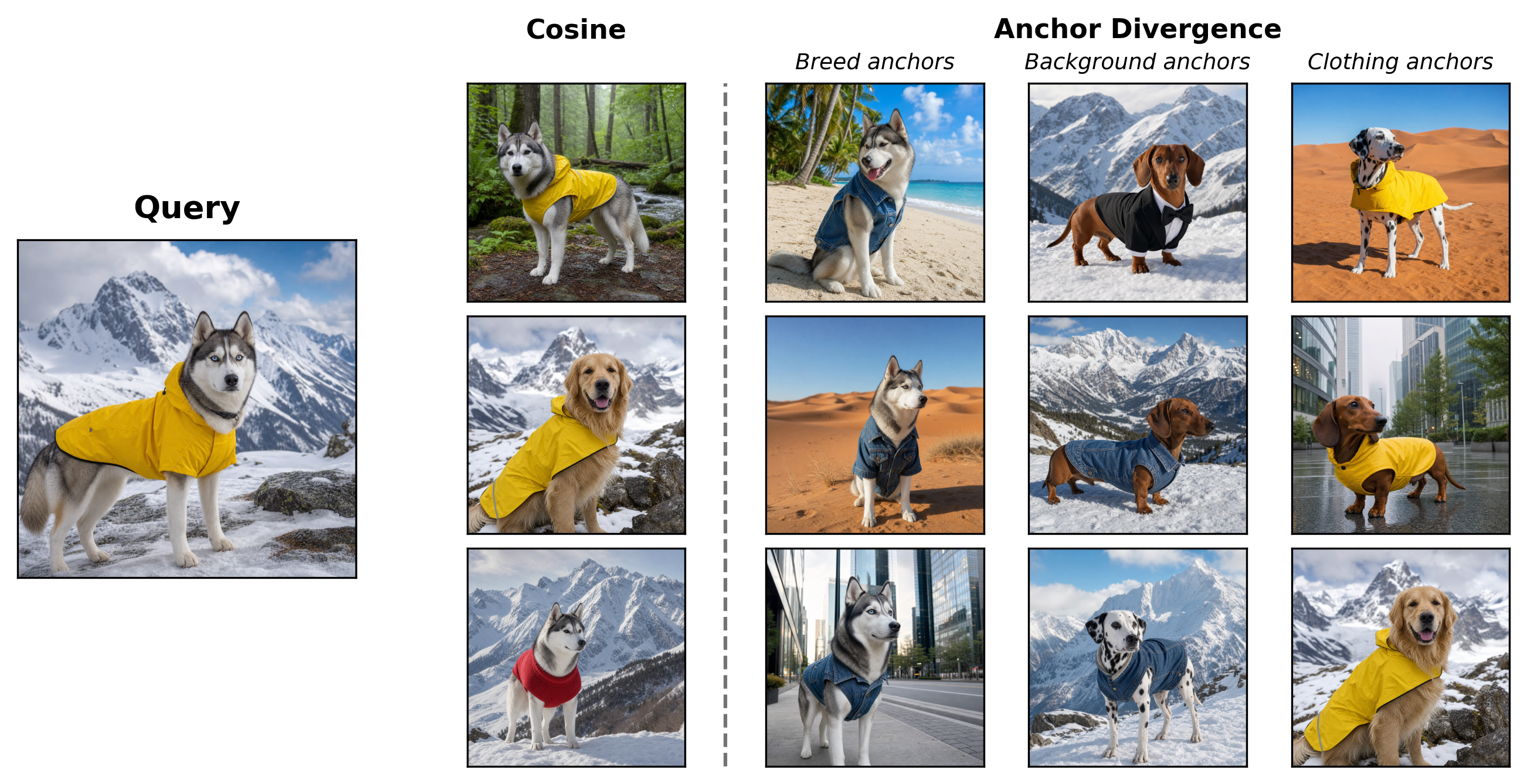}
    \caption{\textbf{Anchor divergences induce context-specific neighborhoods for the same query.}
    Cosine similarity gives a single fixed ranking. In contrast, changing the semantic anchors changes which aspect of the query defines its neighborhood. For example, breed anchors retrieve other Huskies while allowing background and clothing to vary; background and clothing anchors behave analogously. The underlying image representations remain fixed throughout.}
    \label{fig:intro_semantic_neighborhoods}
\end{figure}

The main contribution of this paper is to show that the contrastive structure naturally defines a \emph{family} of semantic geometries on the image representation space, each induced by a probability distribution over semantically relevant ``anchors.'' 
This correspondence leads to \emph{anchor divergences}, a method for specifying context-specific semantic geometries on fixed representations; see \Cref{fig:intro_semantic_neighborhoods}.

Informally, the starting idea is to specify the semantics of interest using a set of text anchors. 
For example, in the chest X-ray setting, the anchors might consist of clinical descriptions of pathologies---the variation in the text includes only the aspects we care about.
Relative to any such anchor set, the contrastive model uses the same softmax structure that appears during training to map each image to a distribution over the anchors, based on their image--text similarities.
We can then compare two images through the distributions they induce over the anchors.
In this way, the chosen anchors determine which semantic differences are emphasized by the comparison.

In principle, we could define context-specific similarity directly using a finite anchor set. 
However, this ties the geometry to a particular collection of anchors: it may reflect idiosyncrasies of that finite set, and the cost of representing the geometry grows with the number of anchors.
The main technical insight is to move from identifying the geometry with a finite set of text anchors to identifying it with a probability distribution from which the anchors are drawn i.i.d.
Using ideas from information geometry, we show that each such distribution corresponds to a Bregman divergence defined on the original image embeddings.
This shifts the object specifying semantic geometry from a finite anchor set to an underlying semantic distribution, which can in turn be modeled efficiently.

The development of the paper is:
\begin{enumerate}
	\item First, we show that the KL divergence between softmax distributions over a finite set of semantic anchors has a natural limit as a Bregman divergence on image representations.
	This reveals a direct role of information geometry in organizing semantic similarity in contrastive representations, and leads to \emph{anchor divergences}---a principled method for specifying context-specific semantic geometries.

	\item A key consequence of the Anchor Divergence construction is that the anchor distribution determines the induced Bregman geometry.
	Drawing on classical connections between exponential families and Bregman geometry
	\citep{efron1978geometry,amari1982differential,wainwright2008graphical,amari2010information},
	we use the observed anchors to fit a parametric model of the underlying population anchor distribution, whose fitted distribution induces an explicit Bregman divergence.
	This has two practical benefits: once the model is fit, the cost of representing and evaluating the geometry need not scale with the number of observed anchors, and the parametric structure can provide useful approximations when only a limited number of anchors are available.

	\item As a further consequence of modeling the anchor distribution, we show how broad semantic structure can be adapted to a specific target using an empirical-Bayes-type construction.
	The idea is to first learn a broad anchor distribution---and hence a broad semantic geometry---from a large collection of anchors that are loosely related to the target, e.g., all text about diseases, and then refine it using a small set of targeted anchors.
	Empirically, we find that this can reduce task-specific anchor requirements by a full order of magnitude.

	\item Finally, experiments test the consequences of the framework.
	We show that the anchor distribution controls which semantic factors govern similarity; parametric models are useful when anchors are limited, while the non-parametric softmax improves as anchor coverage grows; and broad semantic structure can be adapted using few target anchors.
	Across these settings, the resulting geometries improve over task-agnostic cosine similarity and task-specific heuristic approaches.
\end{enumerate}

\section{Anchor Divergence}
\label{sec:method}

We begin with a finite set of text anchors $\Gamma=\{\gamma_1,\dots,\gamma_N\}$ chosen to describe the semantics of interest.
For example, in a pathology-specific geometry, the anchors may be clinical descriptions of different findings.
Given an image embedding $\lambda$, the contrastive softmax induces a distribution over these anchors,
$
    \mathbb{P}_\Gamma(\gamma_k\mid\lambda) \propto \exp(\lambda^\top\gamma_k/\tau),
$
where $\tau>0$ is the temperature.
A natural way to compare two image embeddings is then by the KL divergence between the distributions they induce:
\begin{equation*}
    d_{\mathrm{SM},\Gamma}(\lambda_1\|\lambda_2) := \mathrm{KL}\!\left( \mathbb{P}_\Gamma(\cdot\mid\lambda_1)\,\|\,\mathbb{P}_\Gamma(\cdot\mid\lambda_2) \right).
\end{equation*}
We will refer to this as the \emph{softmax anchor divergence}. 

Specifying a divergence via a finite set of anchors is appealingly simple.
However, we may worry that the procedure may be overly sensitive to the specific choice of anchors.
For example, if swapping out a single clinical note substantively altered the geometry of the pathology-specific anchor divergence, it would suggest that we have not captured a meaningful notion of pathology-specific geometry.
This motivates viewing the observed anchors as samples from an underlying distribution over semantically relevant anchors. In this view, the distribution, rather than any particular finite realization, specifies the semantic context. 
We now see that such a distribution induces a well-defined population geometry on the image embedding space.

To state the result, we recall two standard definitions.

\begin{definition}
For differentiable, strictly convex $f$, the \emph{Bregman divergence} is
\begin{equation*}
    B_f(\eta_1\|\eta_2) := f(\eta_1)-f(\eta_2) - \nabla f(\eta_2)^\top(\eta_1-\eta_2).
\end{equation*}
\end{definition}

\begin{definition}
For a distribution $q$ on $\mathbb R^d$, its \emph{cumulant generating function} (CGF) is
\begin{equation*}
    \kappa_q(\eta) := \log \int \exp(\eta^\top\gamma)\,q(d\gamma).
\end{equation*}
\end{definition}

The next result identifies the corresponding population geometry.

\begin{restatable}[Softmax anchor divergence converges to a population Bregman divergence]{theorem}{softmax}
\label{thm:softmax-to-bregman}
Let
$\Gamma_N=\{\gamma_1,\dots,\gamma_N\}$ be sampled i.i.d.\ from an anchor distribution $q$ on $\mathbb S^{d-1}$.
Then, as $N\to\infty$,
\begin{align}
    \sup_{\lambda_1,\lambda_2\in\mathbb S^{d-1}}
    \left|
        d_{\mathrm{SM},\Gamma_N}(\lambda_1\|\lambda_2)
        -
        B_{\kappa_q}
        \left(
            \frac{\lambda_2}{\tau}
            \Big\|
            \frac{\lambda_1}{\tau}
        \right)
    \right|
    \xrightarrow{\mathrm{a.s.}} 0.
    \label{eq:uniform_softmax_convergence}
\end{align}
\end{restatable}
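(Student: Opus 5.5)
First I rewrite the softmax anchor divergence as a Bregman divergence of an \emph{empirical} cumulant generating function. Let $\hat q_N := \frac1N\sum_{k=1}^N \delta_{\gamma_k}$ be the empirical anchor distribution. Its CGF is
\begin{equation*}
    \kappa_N(\eta) := \kappa_{\hat q_N}(\eta) = \log \frac1N\sum_{k} \exp(\eta^\top\gamma_k),
\end{equation*}
so that $\mathbb P_{\Gamma_N}(\gamma_k\mid\lambda) = \frac1N \exp\bigl(\lambda^\top\gamma_k/\tau - \kappa_N(\lambda/\tau)\bigr)$. Write $\eta_i=\lambda_i/\tau$. Expanding the KL divergence gives
\begin{equation*}
    d_{\mathrm{SM},\Gamma_N}(\lambda_1\|\lambda_2)
    = \mathbb E_{\mathbb P(\cdot\mid\lambda_1)}\bigl[(\eta_1-\eta_2)^\top\gamma\bigr] - \kappa_N(\eta_1)+\kappa_N(\eta_2).
\end{equation*}
Since $\mathbb E_{\mathbb P(\cdot\mid\lambda_1)}[\gamma]=\nabla\kappa_N(\eta_1)$, this equals $B_{\kappa_N}(\eta_2\|\eta_1)$. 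This is the standard exponential-family identity, and it is exact for every $N$. It also explains the reversed argument order in the statement.

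Next I bound the difference using the definition of the Bregman divergence:
\begin{equation*}
    \bigl|B_{\kappa_N}(\eta_2\|\eta_1)-B_{\kappa_q}(\eta_2\|\eta_1)\bigr|
    \le 2\sup_{\|\eta\|\le 1/\tau}|\kappa_N-\kappa_q|(\eta) + \frac{2}{\tau}\sup_{\|\eta\|\le 1/\tau}\|\nabla\kappa_N-\nabla\kappa_q\|(\eta),
\end{equation*}
using $\|\eta_2-\eta_1\|\le 2/\tau$. It therefore suffices to show almost sure uniform convergence of $\kappa_N$ and $\nabla\kappa_N$ to $\kappa_q$ and $\nabla\kappa_q$ on the compact ball $K=\{\|\eta\|\le 1/\tau\}$.

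Define $Z_N(\eta)=\frac1N\sum_k e^{\eta^\top\gamma_k}$ and $M_N(\eta)=\frac1N\sum_k \gamma_k e^{\eta^\top\gamma_k}$, with population counterparts $Z(\eta)=\mathbb E_q e^{\eta^\top\gamma}$ and $M(\eta)=\mathbb E_q\gamma e^{\eta^\top\gamma}$. Because $\|\gamma\|=1$, the summands lie in $[e^{-1/\tau},e^{1/\tau}]$ and their norms are bounded by $e^{1/\tau}$. They are also Lipschitz in $\eta$ with constant $e^{1/\tau}$, uniformly in $\gamma$. Hence $Z_N,M_N,Z,M$ are equi-Lipschitz on $K$.

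I then take a countable dense subset of $K$. The strong law of large numbers gives almost sure pointwise convergence on that subset simultaneously, and equicontinuity upgrades this to uniform convergence on $K$. This is the step needing most care: it is a uniform strong law over a parametrized family, and equi-Lipschitzness together with compactness of $K$ is what makes it work. Equivalently, one can invoke a Glivenko--Cantelli argument for this Lipschitz class.

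Finally, $Z,Z_N\ge e^{-1/\tau}>0$, and on the relevant domain $\log$ and the map $(Z,M)\mapsto M/Z$ are Lipschitz. Therefore $\kappa_N=\log Z_N\to\kappa_q$ and $\nabla\kappa_N=M_N/Z_N\to\nabla\kappa_q$ uniformly on $K$, almost surely. Plugging these into the bound above yields \eqref{eq:uniform_softmax_convergence}.
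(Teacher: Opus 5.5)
Your proposal is correct and follows the paper's proof essentially step for step. Both use the exact identity $d_{\mathrm{SM},\Gamma_N}(\lambda_1\|\lambda_2)=B_{\kappa_N}(\eta_2\|\eta_1)$, the same bound $2\sup|\kappa_N-\kappa_q|+\frac{2}{\tau}\sup\|\nabla\kappa_N-\nabla\kappa_q\|$, and a uniform strong law for $Z_N$ and $M_N$ obtained from equi-Lipschitzness plus compactness.

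The differences are cosmetic. You work on the ball rather than the sphere, use a countable dense set instead of finite $\varepsilon$-nets, and use the deterministic lower bound $Z_N,Z\ge e^{-1/\tau}$ where the paper argues $\inf Z>0$ by compactness. One detail to add: state explicitly, as the paper does, that $\nabla\kappa_q=M/Z$, which holds by differentiating under the integral since the integrand is bounded.
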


The proof is deferred to \Cref{app:proof}.
The key observation is that the contrastive softmax is an exponential tilt of the empirical measure over anchors.
The resulting natural exponential family links KL divergence to the Bregman geometry generated by $\kappa_q$.

Thus, the simple idea of comparing images through their distributions over semantic anchors yields a population Bregman geometry determined by the anchor distribution. This motivates our central definition.

\begin{definition}
\label{def:anchor_divergence}
For a distribution $q$ on $\mathbb R^d$, the \emph{anchor divergence} is
\begin{align}
    d_q(\lambda_1\|\lambda_2)
    :=
    B_{\kappa_q}
    \left(
        \frac{\lambda_2}{\tau}
        \Big\|
        \frac{\lambda_1}{\tau}
    \right).
\end{align}
\end{definition} 

Under this definition, the anchor distribution $q$ specifies the semantic context and thereby determines the geometry on the image embedding space.
The softmax anchor divergence is the special case induced by the empirical distribution.
This distributional view will allow us to model and adapt the semantic geometry in the following sections.

\section{Modeling Structured Semantic Geometries}
\label{sec:parametric_models}

We have arrived at anchor divergences as Bregman divergences induced by an anchor \emph{distribution} $q$ that represents the semantics of interest.
In practice, however, we observe only a finite set of anchors.
We therefore turn to modeling the underlying distribution from these observations, and hence the induced semantic geometry.

The simplest approach is to use the empirical measure on the anchor set. It is straightforward to verify that the anchor divergence associated with this empirical distribution is exactly the softmax anchor divergence defined earlier. 
As established by \Cref{thm:softmax-to-bregman} this will converge to the target population geometry as we get more and more anchors.
With limited anchors, however, the finite anchor set may poorly represent the underlying semantics.

Since $q$ determines the induced geometry, we can impose structure on the geometry by modeling $q$.
The idea is to specify a parametric model $q_\theta$ for the anchor distribution. For example, we may take $q_\theta$ to be a multivariate Gaussian distribution with $\theta$ its mean and covariance. We then fit $\hat{\theta}$ to the specified anchors, say by maximum likelihood.
The fitted distribution $q_{\hat{\theta}}$ then induces the corresponding anchor divergence through \Cref{def:anchor_divergence}.
Thus, different anchor-distribution assumptions induce different structured geometries in image embedding space.

This reflects the usual tradeoff of parametric modeling. Simpler models impose stronger assumptions, which can help when anchors only partially specify the target semantics, but may miss the target geometry if those assumptions are too restrictive, even with many anchors. Moreover, once fit, the divergence is evaluated through the fitted distribution rather than all $N$ observed anchors, so its evaluation cost need not scale with the anchor-set size; we examine this empirically in \Cref{app:runtime}.

\begin{table}[t]
\caption{
\textbf{Modeling the anchor distribution induces different semantic geometries.}
Each column specifies an anchor-distribution model and its CGF.
The empirical model places uniform mass on the observed anchors; for the Gaussian model, $\hat{\mu}$ and $\hat{\Sigma}$ are the sample mean and covariance; for the Gaussian mixture, $\{(\hat{\pi}_m,\hat{\mu}_m,\hat{\Sigma}_m)\}_{m=1}^M$ are fit by expectation--maximization.
}
\label{tab:parametric_models}

\begin{center}

\small
\setlength{\tabcolsep}{5pt}
\renewcommand{\arraystretch}{1.7}

\begin{tabular}{cccc}
\toprule

\textbf{Anchor model}
& Empirical
& Gaussian
& Gaussian Mixture
\\

\midrule

CGF $\kappa(\eta)$
&
$\displaystyle
\log \frac{1}{|\Gamma|}
\sum_{\gamma\in\Gamma}
\exp(\eta^\top\gamma)$
&
$\displaystyle
\hat{\mu}^\top\eta
+\frac{1}{2}\eta^\top\hat{\Sigma}\eta$
&
$\displaystyle
\log\sum_{m=1}^{M}
\hat{\pi}_m
\exp\left(
    \hat{\mu}_m^\top\eta
    +\frac{1}{2}\eta^\top\hat{\Sigma}_m\eta
\right)$
\\

\bottomrule
\end{tabular}

\end{center}
\end{table}

\Cref{tab:parametric_models} illustrates this perspective for three anchor models.
The empirical distribution recovers softmax anchor divergence.
The Gaussian model is the simplest parametric alternative: fitting its mean and covariance yields a quadratic CGF, and hence an elliptical geometry.
The resulting geometry emphasizes directions along which the modeled anchor distribution varies.
A Gaussian mixture allows richer structure: its CGF is a log-sum-exp of quadratic potentials, allowing the induced geometry to reflect multiple semantic modes rather than a single global covariance structure.

\paragraph{Bregman divergence in a projected anchor space.}
We also consider mildly generalizing from Bregman divergences on the contrastive embedding space to Bregman divergences on a subspace of the embedding space. 
This is motivated by evidence that semantic structure is often encoded in
low-dimensional subspaces \citep{oldfield2023parts,bhalla2024interpreting, engels2025not,park2025geometry}. 
We learn a projection, optionally followed by normalization, onto a semantically
relevant subspace from the anchor set and define the Anchor Divergence on the
projected space. 
In \Cref{app:latent_expfam}, we show that this corresponds to a latent-variable model of the anchor distribution and yields a procedure for estimating the projector. 
We refer to the resulting divergences as \emph{latent Gaussian} and \emph{latent spherical} anchor divergences. 
Operationally, both project onto the top-$k$ eigenvectors of the empirical anchor covariance; the latent Gaussian divergence then uses Euclidean distance, while the latent spherical divergence uses cosine distance.
\footnote{We note the latent spherical case yields the same projected cosine structure proposed in \citep{herzog2026reevaluating}} 

The empirical and parametric constructions provide different ways of representing the anchor distribution, and hence different approximations to the same target semantic geometry.
\Cref{fig:chexpert_qualitative} provides a qualitative illustration of the usefulness of the anchor divergence framework and its parametric approximations: both emphasize the pathology of interest over nuisance device-related structure.

\begin{figure}[t]
    \centering
    \includegraphics[
        width=0.88\linewidth
    ]{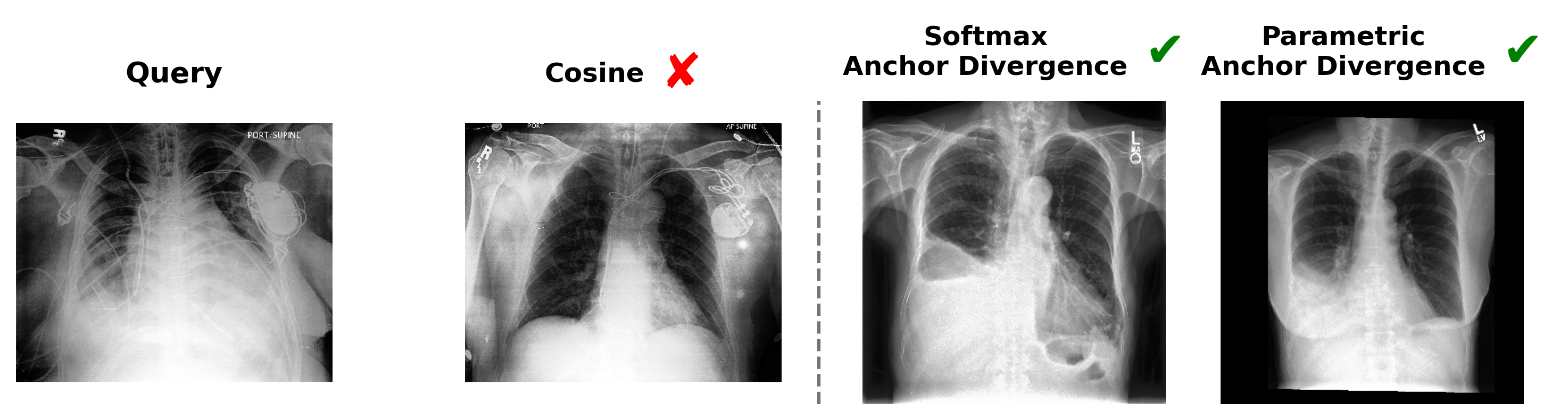}
	\caption{\textbf{Anchor divergences can emphasize relevant semantics over nuisance variation.}
	The query X-ray shows the target pathology, pleural effusion, as well as a support device.
	Cosine similarity retrieves an image sharing device-related structure but not the target pathology, whereas both softmax and (Gaussian model) parametric anchor divergences retrieve images with pleural effusion.}
    \label{fig:chexpert_qualitative}
\end{figure}

\section{Adapting Semantic Geometries from Broad Anchors}
\label{sec:adaptation}

Obtaining many target-specific anchors can be cumbersome, while broader related anchors may be easier to obtain.
The distributional view enables an empirical-Bayes approach: broad anchors provide prior information about the parameters of the anchor distribution, which is then updated using scarce target anchors.
For example, in pathology-specific retrieval, we may have abundant general clinical text but only a few precise descriptions of the target pathology.
The goal is to specialize broad semantic structure using a small number of pathology-specific anchors.

Let $\Gamma_{\mathrm{broad}}$ and $\Gamma_{\mathrm{target}}$ denote the broad and target anchor sets, respectively. Let $\{q_\theta:\theta\in\Theta\}$ be a parametric anchor model with prior $p(\theta\mid\Omega)$. We adapt the anchor distribution in three steps:
\begin{enumerate}
    \item fit $q_\theta$ to $\Gamma_{\mathrm{broad}}$ and use the fit to set the prior hyperparameters $\Omega$;

    \item update this prior using $\Gamma_{\mathrm{target}}$ via a model-specific posterior or posterior-like update;

	\item summarize the update by $\hat{\theta}$, such as a posterior mean or MAP estimate.
\end{enumerate}

The adapted distribution $q_{\hat{\theta}}$ induces the corresponding semantic geometry through its anchor divergence, giving the following definition.

\begin{definition}
\label{def:posterior_anchor_divergence}

Given $\Gamma_{\mathrm{broad}}$ and $\Gamma_{\mathrm{target}}$, let $q_{\hat{\theta}}$ denote the adapted anchor distribution obtained by using $\Gamma_{\mathrm{broad}}$ to define prior information and updating it with $\Gamma_{\mathrm{target}}$. The \emph{posterior anchor divergence} is the anchor divergence induced by $q_{\hat{\theta}}$.

\end{definition}

All methods target the same object: a Bregman geometry induced by an anchor distribution. They differ only in how that distribution is estimated. Softmax anchor divergence uses the empirical distribution over target anchors; parametric anchor divergence fits a structured model to those anchors; posterior anchor divergence further regularizes this structured estimate using broad, related anchors as prior information. When $|\Gamma_{\mathrm{target}}|$ is small, broad anchors stabilize the geometry; as $|\Gamma_{\mathrm{target}}|$ grows, target anchors increasingly determine it. Model-specific updates are in \Cref{app:adaptation_appendix}.

\section{Experiments}
\label{sec:experiments}

Our experiments are designed to test the geometric and statistical predictions of the framework. We organize the experiments around the following questions:
\begin{enumerate}
    \item Can Anchor Divergences effectively specialize geometry to different semantic factors?
    \item With limited anchors, can parametric models usefully approximate the target geometry?
    \item Does posterior anchor modeling enable semantic adaptation using few target anchors?
\end{enumerate}

\subsection{Implementation details}
\label{sec:common_setup}

All natural-image experiments \citep{russakovsky2015imagenet,hendrycks2021many,sangkloy2016sketchy,wah2011caltech} use MetaCLIP-2 \citep{chuang2025metaclip2}.
Class-name anchors use the prompt template ``a photo of a \{class\}'', and style anchors use ``a \{style\} rendition of an object''.
For the clinical experiments, we use CheXpert \citep{irvin2019chexpert} with CheXzero \citep{tiu2022expert}. 
Image and text embeddings are normalized to unit $\ell_2$ norm, and we fix the inverse temperature $1/\tau$ at the model-provided logit scale.

For asymmetric divergences, we rank keys using the symmetrized score
\begin{equation*}
    d_{\mathrm{sym}}(\lambda_q,\lambda_k)
    =
    \frac{1}{2}
    \left[
        d(\lambda_q\|\lambda_k)
        +
        d(\lambda_k\|\lambda_q)
    \right].
\end{equation*}
We report Area Under the Precision--Recall Curve (AUPRC), which is suitable when relevant keys are sparse and prevalence varies across semantic targets.
Relevance is defined by the semantic labels of each experiment.

We compare against two reference geometries.
Cosine similarity provides the standard task-agnostic geometry.
Image-text inversion (ITI) provides a task-specific heuristic baseline using a predict-then-retrieve procedure: it first matches the query image to the closest anchor and then ranks keys by cosine similarity to this anchor.
This follows the spirit of image inversion methods in \citet{baldrati2023zero,mistretta2025cross}.
See \Cref{app:implementation} for full implementation details.

\subsection{Anchor distributions select the semantic coordinate}
\label{sec:exp_anchor_control}
We first isolate the effect of the anchor distribution with the embeddings fixed. \Cref{fig:intro_semantic_neighborhoods} gives a qualitative illustration. We test this systematically: if the anchor distribution determines the semantic geometry, then changing it should change which semantic coordinate is preserved by retrieval.

\paragraph{Clinical coordinate control.}

\begin{table}[t]
\caption{
\textbf{Pathology-specific anchors specialize clinical retrieval.}
Condition-wise AUPRC differences between softmax anchor divergence and each reference geometry; positive values favor anchor divergence.
Results are averaged over 10 independently generated anchor sets and 10 CheXzero checkpoints, with 95\% intervals accounting for bootstrap and anchor-generation variability.
Values $>0.005$ are bold, values in $[-0.005,0.005]$ are gray, and values $<-0.005$ are bold red.
}
\label{tab:chexpert_softmax_diffs}

\begin{center}
\small
\setlength{\tabcolsep}{16pt}
\renewcommand{\arraystretch}{0.98}

\begin{tabular}{lc@{\hspace{8em}}c}
\toprule

\textbf{Pathology}
&
\textbf{Softmax $-$ Cosine}
&
\textbf{Softmax $-$ ITI}
\\

\midrule

Enlarged Cardiomediastinum
& \neu{$-0.0002 \pm 0.0003$}
& \neu{$-0.0045 \pm 0.0010$}
\\

Cardiomegaly
& \pos{$0.0279 \pm 0.0034$}
& \pos{$0.0069 \pm 0.0091$}
\\

Lung Opacity
& \pos{$0.0090 \pm 0.0032$}
& \pos{$0.0137 \pm 0.0035$}
\\

Lung Lesion
& \neu{$0.0001 \pm 0.0005$}
& \pos{$0.0051 \pm 0.0010$}
\\

Edema
& \pos{$0.0316 \pm 0.0044$}
& \pos{$0.0286 \pm 0.0072$}
\\

Consolidation
& \pos{$0.0062 \pm 0.0014$}
& \pos{$0.0073 \pm 0.0023$}
\\

Pneumonia
& \pos{$0.0081 \pm 0.0013$}
& \pos{$0.0117 \pm 0.0018$}
\\

Atelectasis
& \pos{$0.0095 \pm 0.0027$}
& \pos{$0.0113 \pm 0.0036$}
\\

Pneumothorax
& \pos{$0.0080 \pm 0.0013$}
& \pos{$0.0128 \pm 0.0026$}
\\

Pleural Effusion
& \pos{$0.0465 \pm 0.0060$}
& \pos{$0.0327 \pm 0.0098$}
\\

Pleural Other
& \neu{$-0.0009 \pm 0.0003$}
& \neu{$0.0025 \pm 0.0005$}
\\

Fracture
& \neu{$0.0003 \pm 0.0002$}
& \neu{$0.0030 \pm 0.0005$}
\\

Support Devices
& \pos{$0.0208 \pm 0.0056$}
& \pos{$0.0366 \pm 0.0105$}
\\

\midrule

\textbf{Average}
& \pos{$0.0128 \pm 0.0011$}
& \pos{$0.0129 \pm 0.0014$}
\\

\bottomrule
\end{tabular}

\end{center}
\end{table}

For each CheXpert finding, we independently generate 10 sets of report-style clinical text anchors from a fixed LLM prompt; see \Cref{app:chexpert_anchors}. We evaluate retrieval using the corresponding pathology label and average over anchor generations and the 10 released CheXzero checkpoints. \Cref{tab:chexpert_softmax_diffs} reports AUPRC differences relative to cosine similarity and ITI, with 95\% intervals accounting for anchor-generation and bootstrapping.
Anchor divergence improves over the reference geometries for most findings. The largest gains occur for pleural effusion, cardiomegaly, edema, and support devices. These results indicate that pathology-specific anchors can shift retrieval toward the intended clinical coordinate beyond generic image-space similarity.

\paragraph{Object--style control.}

We next test whether varying the anchor distribution can interpolate between two semantic coordinates. ImageNet-R contains ImageNet object classes rendered in diverse styles, allowing retrieval to be evaluated by either object identity or style. Let $q_{\mathrm{obj}}$ and $q_{\mathrm{style}}$ denote the corresponding empirical anchor distributions, and mix them as $q_p = p q_{\mathrm{obj}} + (1-p) q_{\mathrm{style}}$, with $p\in[0,1]$. We evaluate retrieval as $p$ varies, averaging over 10 random realizations of the anchor set. \Cref{fig:anchor_control} shows that increasing object-anchor mass increases object retrieval while decreasing style retrieval.

\begin{figure}[t]
    \centering
    \includegraphics[width=0.98\linewidth]{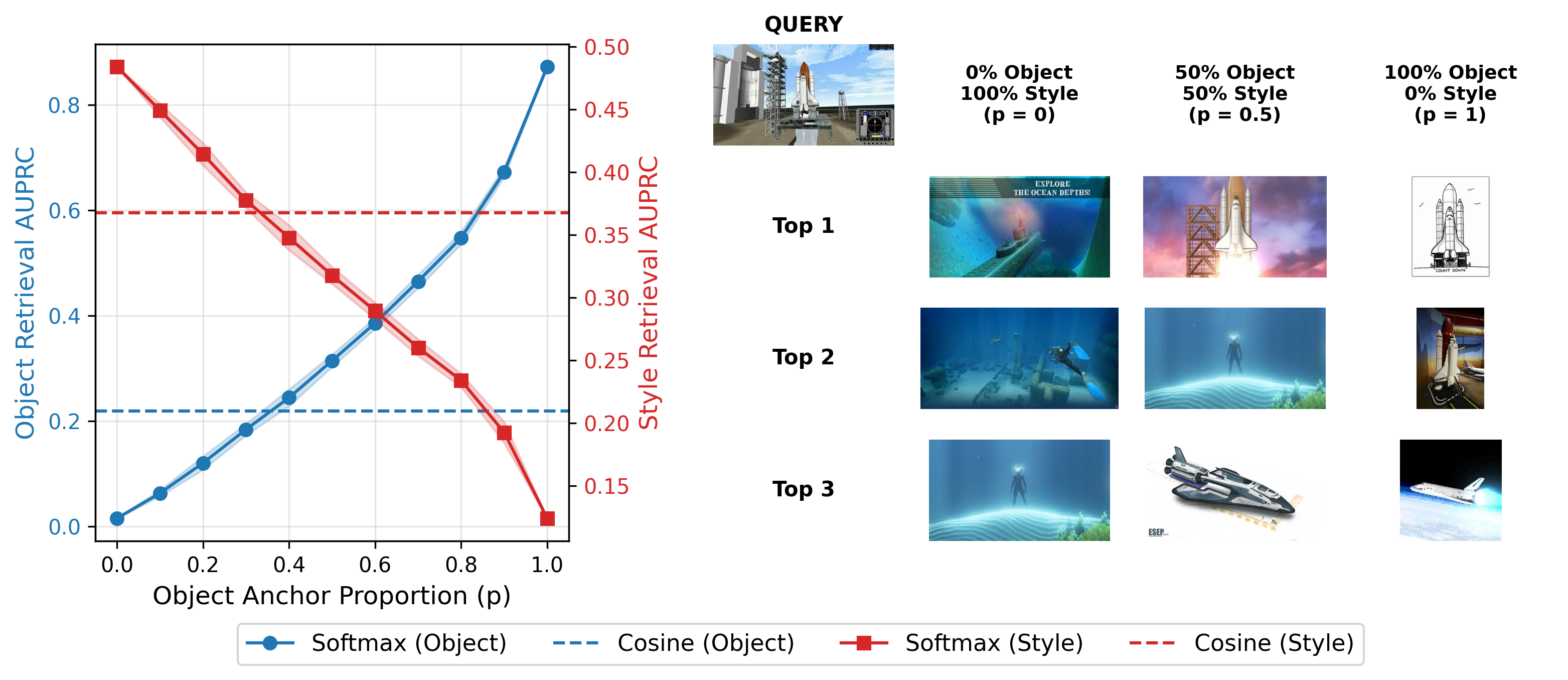}
    \caption{
	\textbf{Anchor distributions select the semantic coordinate.}
	\textbf{Left:} Mixing object and style anchor distributions on ImageNet-R shifts retrieval from preserving style to preserving object identity as $p$ increases.
	\textbf{Right:} For a video-game-style space shuttle query, retrieved neighbors similarly shift from style-matched to object-matched images.
    }
    \label{fig:anchor_control}
\end{figure}

Together, these experiments show that the same fixed image embeddings support different semantic geometries solely through changes to the anchor distribution.

\subsection{Parametric modeling is useful with limited anchors}
\label{sec:exp_estimator_tradeoff}

We next test whether parametric models of the anchor distribution provide useful approximations to the target geometry when only limited anchors are available. Softmax uses the empirical anchor distribution directly, whereas parametric models impose additional structure. We therefore expect parametric models to be most useful when anchors are limited, while the empirical construction should improve as anchor coverage grows.

We first vary the number of ImageNet class anchors and evaluate retrieval on ImageNet. \Cref{fig:q2_estimator_tradeoff} left shows the intuitive sample-efficiency pattern: structured geometries recover useful semantic neighborhoods at moderate anchor budgets, while softmax improves with anchor coverage and eventually becomes strongest. Similar trends hold across the other datasets; see \Cref{app:additional_experiments}.

We then consider the high-coverage regime using full dataset-matched anchor sets on ImageNet-R, CUB-200, and Sketchy. \Cref{fig:q2_estimator_tradeoff} right shows that softmax anchor divergence is strongest in this setting, while parametric anchor divergences retain much of its gain over cosine similarity. Together with the ImageNet results, this illustrates the modeling tradeoff from \Cref{sec:parametric_models}: parametric structure can be useful with limited anchors, while the empirical anchor distribution becomes increasingly effective as anchor coverage grows. 

\begin{figure}[t]
    \centering
    \includegraphics[width=\linewidth]{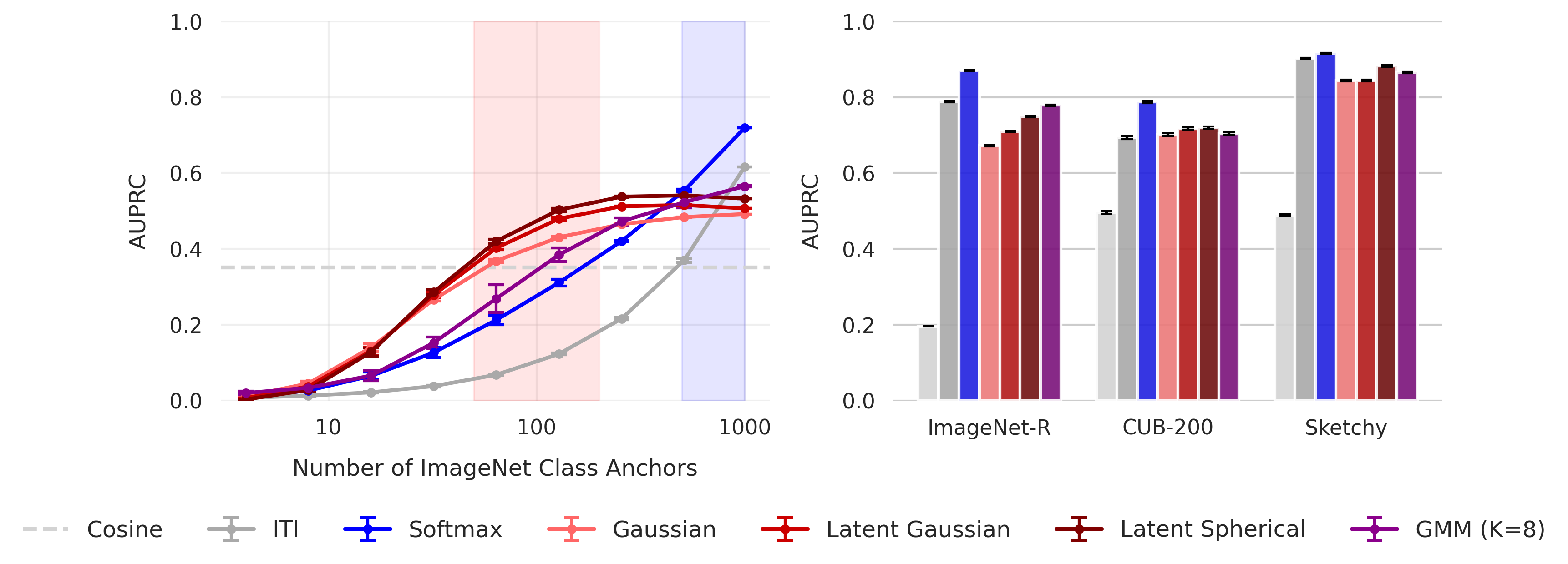}

	\caption{
	\textbf{Parametric modeling is useful with limited anchors.}
	\textbf{Left:} On ImageNet, parametric anchor divergences are strongest in the red-shaded limited-anchor regime, while softmax improves with coverage and becomes strongest in the blue high-coverage regime.
	\textbf{Right:} With full dataset-matched anchor sets on ImageNet-R, CUB-200, and Sketchy, softmax gives the strongest retrieval performance, while parametric anchor divergences retain much of its gain over cosine similarity.
	}
    \label{fig:q2_estimator_tradeoff}
\end{figure}

\subsection{Posterior modeling adapts broad geometry with few target anchors}
\label{sec:exp_semantic_adaptation}

Finally, we test whether broad semantic structure can be adapted toward a target geometry using only a few task-specific anchors. This setting exposes a limitation of softmax anchor divergence: as a nonparametric estimator, it places mass only on observed target anchors, so with few anchors it can yield a poor approximation of the target geometry. Posterior anchor models instead use broad, semantically related anchors as prior information and specialize this structure using target anchors.

We evaluate this on ImageNet, using WordNet anchors as broad semantic structure and ImageNet class names as target-specific anchors. We vary the number of target anchors and compare target-only softmax, latent spherical, and GMM anchor divergences with posterior latent spherical and posterior GMM models adapted from WordNet, alongside cosine similarity. Results are averaged over five random seeds; model-specific adaptation details are given in \Cref{app:adaptation_appendix}.

\Cref{fig:semantic_adaptation} shows that posterior anchor models are most useful when target anchors are scarce. Broad anchors provide prior semantic structure, while target anchors specialize the geometry toward the ImageNet class coordinate. As target-anchor coverage grows, the gap to target-only models narrows as the target anchors increasingly determine the geometry.

\paragraph{Summary.}
The experiments support the main consequences of the framework. Changing the anchor distribution changes which semantic factors govern similarity; parametric models provide useful approximations when anchors are limited, while the empirical softmax geometry improves with anchor coverage; and posterior anchor models adapt broad semantic structure using few task-specific anchors. Together, these results support anchor divergences as a flexible framework for specifying, modeling, and adapting context-dependent semantic geometry in contrastive representation spaces.

\begin{figure}[t]
    \centering
    \includegraphics[width=0.8\linewidth]{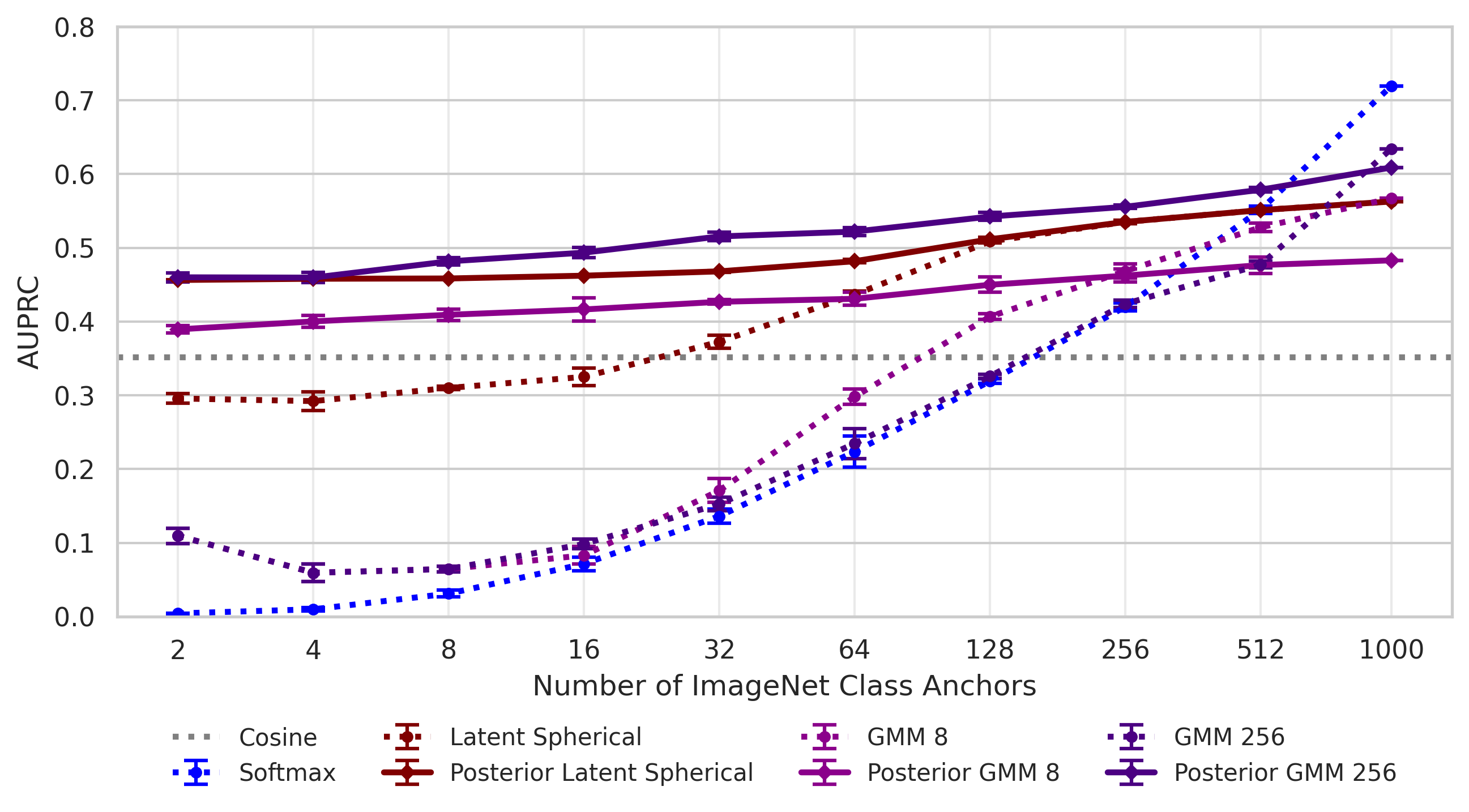}

	\caption{
	\textbf{Posterior anchor modeling adapts broad geometries using few target anchors.}
	Dotted curves use only target-specific ImageNet anchors, while solid curves use posterior anchor models adapted from WordNet with the same target anchors.
	Posterior models are strongest at small and moderate target-anchor budgets, with the advantage narrowing as coverage grows.
	}

    \label{fig:semantic_adaptation}
\end{figure}

\section{Discussion and Related Work}

\paragraph{Semantic geometry in contrastive representations.}
Prior work has shown that learned representations often encode semantic structure geometrically, from word-embedding analogies and latent-variable accounts of embeddings to linear structure in neural representations
\citep{mikolov2013efficient,arora2016latent,saxe2019mathematical,park2023linear,park2025geometry}.
In vision--language models, related work has studied how CLIP-like representations organize concepts through directions, parts, subspaces, and interpretable coordinates
\citep{radford2021learning,chuang2025metaclip2,oldfield2023parts,bhalla2024interpreting,gandelsman2024interpreting,merullo2022linearly}.
Our question is complementary: given that a representation may encode many semantic factors simultaneously, which of them should determine similarity in a particular context?
Anchor divergences answer this by representing the semantic context through an anchor distribution, which induces a corresponding Bregman geometry on the fixed representation space.

\paragraph{Image--image comparison in CLIP-like models.}
CLIP-style image embeddings are commonly compared using cosine similarity across applications
\citep{abbasi2025clip,zheng2025benchmarking,denner2025leveraging,tiu2022expert}.
Several works have proposed alternatives to raw image--image cosine comparison, including cache-based adaptation, cross-modal aggregation, text-based inversion, and methods addressing image--text or intra-modal mismatch
\citep{zhang2022tip,udandarao2023sus,yi2024leveraging,mistretta2025cross,baldrati2023zero,herzog2026reevaluating}. 
These works motivate alternatives to the fixed cosine geometry, but do not generally ask how semantic context itself should determine the geometry on a fixed representation space.

\paragraph{Relation to TIP-X.}
TIP-X uses text as a bridge for image--image comparison: each image is mapped to a softmax affinity profile over downstream class-text prompts, and images are compared by the KL divergence between these profiles \citep{udandarao2023sus}.
This is the same finite-anchor construction as our softmax anchor divergence when the anchors are the downstream class-text prompts.
Our contribution begins from this finite construction and studies the geometry determined by an underlying distribution over anchors.
We show that an anchor distribution induces a population Bregman geometry and use this perspective to model and adapt semantic geometry.

\paragraph{Information geometry and metric learning.}
Our construction builds on classical connections between exponential families, KL divergence, and Bregman geometry
\citep{efron1978geometry,amari1982differential,banerjee2005clustering,wainwright2008graphical,amari2010information}.
It is also related to metric learning, which learns task-specific distances from supervised constraints on the representation space
\citep{xing2002distance,weinberger2009distance,schroff2015facenet,movshovitz2017no,musgrave2020metric,roth2020revisiting}.
Anchor divergences instead specify the semantic context through a distribution over semantic text anchors, without requiring labeled image pairs or representation fine-tuning.
Because this anchor distribution determines the induced geometry, parametric modeling imposes corresponding geometric structure, while Bayesian updating provides a mechanism for semantic adaptation.

\paragraph{Future directions.}
Although our experiments focus on image--image retrieval in vision--language models, the construction is not intrinsically tied to images. The key ingredient is that a representation in one space defines a softmax distribution over anchors in another space through query--key scores. The theoretical structure developed here follows from this softmax, or equivalently exponential-family, form rather than from any image-specific property. In this view, a distribution over keys induces a geometry on the query space. This suggests that anchor divergences extend naturally to other contrastive and multimodal settings with the same structure. More broadly, this points to the importance of understanding the geometry of softmax-based models as a general problem in AI: recent work argues that representations defining softmax distributions are naturally studied through information geometry, with implications for probing and steering \citep{park2026information}. Progress on anchor divergences may therefore be of independent interest beyond retrieval, contributing to a broader theory of semantic geometry in softmax models.

\subsubsection*{Acknowledgments}
This work is supported by ONR grant N00014-23-1-2591.

\printbibliography

\appendix
\crefalias{section}{appendix}
\crefalias{subsection}{appendix}

\section{Proof of \Cref{thm:softmax-to-bregman}}\label{app:proof}

\softmax*
\begin{proof}
	We begin by introducing some notation and definitions.
	We define
	\begin{align*}
		\eta=\frac{\lambda}{\tau}, \qquad \mathcal E:=\left\{\frac{\lambda}{\tau}:\lambda\in \mathcal S^{d-1}\right\} \subset \mathbb R^d.
	\end{align*}
	Since $\mathcal S^{d-1}$ is compact, $\mathcal E$ is compact as well.
	We rewrite the population cumulant generating function as
	\begin{align*}
		\kappa(\eta) = \log \int e^{\eta^\top \gamma}\,q(d\gamma) = \log \mathbb E_q\left[e^{\eta^\top\gamma}\right].
	\end{align*}
	For an anchor set $\Gamma_N=\{\gamma_1,\dots,\gamma_N\}$, the cumulant generating function of the empirical measure is 
	\begin{align*}
		\widehat\kappa_N(\eta) := \log\left(\frac1N\sum_{i=1}^N e^{\eta^\top \gamma_i}\right).
	\end{align*}
	The factor $1/N$ does not affect the induced softmax probabilities, but it is convenient for the law of large numbers.
	We also define
	\begin{align*}
		Z_N(\eta):=\frac1N\sum_{i=1}^N e^{\eta^\top \gamma_i}, \qquad Z(\eta):=\mathbb E_q[e^{\eta^\top \gamma}],
	\end{align*}
	and
	\begin{align*}
		M_N(\eta):=\frac1N\sum_{i=1}^N \gamma_i e^{\eta^\top \gamma_i}, \qquad M(\eta):=\mathbb E_q[\gamma e^{\eta^\top \gamma}].
	\end{align*}
	Throughout this proof, the anchors are sampled i.i.d.\ from $q$.

	Now we are ready to prove the theorem. The first step is to rewrite the discrete softmax family as an empirical exponential family. Given $\Gamma_N=\{\gamma_1,\dots,\gamma_N\}$ and $\eta\in\mathbb R^d$,
	\begin{align*}
		p_N(i\mid \eta) &= \frac{e^{\eta^\top \gamma_i}}{\sum_{j=1}^N e^{\eta^\top \gamma_j}} \\
		&= \frac{\frac1N e^{\eta^\top \gamma_i}}{\frac1N\sum_{j=1}^N e^{\eta^\top \gamma_j}} \\
		&= \exp\big(\eta^\top \gamma_i-\widehat\kappa_N(\eta)\big)\cdot \frac1N.
	\end{align*}
	Thus, the discrete softmax is exactly the exponential tilt of the empirical anchor measure.

	Let $\eta_a=\lambda_a/\tau$ for $a=1,2$. Then
	\begin{align*}
		d_{\mathrm{SM}}(\lambda_1\|\lambda_2\mid \Gamma_N) &= \sum_{i=1}^N p_N(i\mid \eta_1) \log \frac{p_N(i\mid \eta_1)}{p_N(i\mid \eta_2)} \\
		&= \sum_{i=1}^N p_N(i\mid \eta_1) \Big((\eta_1-\eta_2)^\top \gamma_i -\widehat\kappa_N(\eta_1) +\widehat\kappa_N(\eta_2)\Big).
	\end{align*}
	Since $\sum_i p_N(i\mid \eta_1)=1$, this becomes
	\begin{align}
		d_{\mathrm{SM}}(\lambda_1\|\lambda_2\mid \Gamma_N) &= (\eta_1-\eta_2)^\top \sum_{i=1}^N p_N(i\mid \eta_1)\gamma_i -\widehat\kappa_N(\eta_1)+\widehat\kappa_N(\eta_2). \label{eq:sm-kl-expand-app}
	\end{align}
	But
	\begin{align*}
		\nabla \widehat\kappa_N(\eta) = \frac{\sum_{i=1}^N \gamma_i e^{\eta^\top \gamma_i}} {\sum_{j=1}^N e^{\eta^\top \gamma_j}} = \sum_{i=1}^N p_N(i\mid \eta)\gamma_i,
	\end{align*}
	so \Cref{eq:sm-kl-expand-app} can be written as
	\begin{align}
		d_{\mathrm{SM}}(\lambda_1\|\lambda_2\mid \Gamma_N) &= \widehat\kappa_N(\eta_2)-\widehat\kappa_N(\eta_1) +(\eta_1-\eta_2)^\top \nabla \widehat\kappa_N(\eta_1). \label{eq:empirical-bregman-eta1-app}
	\end{align}
	Equivalently,
	\begin{align}
		d_{\mathrm{SM}}(\lambda_1\|\lambda_2\mid \Gamma_N) = B_{\widehat\kappa_N}(\eta_2\|\eta_1), \label{eq:empirical-bregman-app}
	\end{align}
	where $B_f(y\|x):=f(y)-f(x)-\nabla f(x)^\top (y-x)$.

	We now show uniform convergence of $Z_N$ to $Z$ on $\mathcal E$. Since $\|\eta\|=1/\tau$ for $\eta\in\mathcal E$ and $\|\gamma\|=1$, we have
	\begin{align*}
		0< e^{\eta^\top\gamma}\le e^{1/\tau} \qquad \text{for all } \eta\in\mathcal E,\ \gamma\in \mathcal S^{d-1}.
	\end{align*}
	Moreover, for any $\eta,\eta'\in\mathcal E$ and $\gamma\in\mathcal S^{d-1}$,
	\begin{align*}
		\left|e^{\eta^\top\gamma}-e^{(\eta')^\top\gamma}\right| &\le e^{1/\tau}\,|(\eta-\eta')^\top\gamma| \le e^{1/\tau}\|\eta-\eta'\|.
	\end{align*}
	Therefore, both $Z_N$ and $Z$ are Lipschitz on $\mathcal E$ with deterministic Lipschitz constant at most $e^{1/\tau}$:
	\begin{align*}
		|Z_N(\eta)-Z_N(\eta')| \le e^{1/\tau}\|\eta-\eta'\|, \qquad |Z(\eta)-Z(\eta')| \le e^{1/\tau}\|\eta-\eta'\|.
	\end{align*}

	Fix $\varepsilon>0$, and let $T_\varepsilon\subset \mathcal E$ be a finite $\varepsilon$-net of the compact set $\mathcal E$. For each fixed $\eta\in T_\varepsilon$, the strong law of large numbers gives
	\begin{align*}
		Z_N(\eta)\xrightarrow{\mathrm{a.s.}} Z(\eta).
	\end{align*}
	Since $T_\varepsilon$ is finite, it follows that
	\begin{align*}
		\max_{\eta\in T_\varepsilon}|Z_N(\eta)-Z(\eta)|\xrightarrow{\mathrm{a.s.}}0.
	\end{align*}
	Now let $\eta\in\mathcal E$, and choose $\eta_\varepsilon\in T_\varepsilon$ with $\|\eta-\eta_\varepsilon\|\le \varepsilon$. Then
	\begin{align*}
		|Z_N(\eta)-Z(\eta)| &\le |Z_N(\eta)-Z_N(\eta_\varepsilon)| + |Z_N(\eta_\varepsilon)-Z(\eta_\varepsilon)| + |Z(\eta_\varepsilon)-Z(\eta)| \\
		&\le 2e^{1/\tau}\varepsilon + \max_{\eta'\in T_\varepsilon}|Z_N(\eta')-Z(\eta')|.
	\end{align*}
	Taking the supremum over $\eta\in\mathcal E$ yields
	\begin{align*}
		\sup_{\eta\in\mathcal E}|Z_N(\eta)-Z(\eta)| \le 2e^{1/\tau}\varepsilon + \max_{\eta'\in T_\varepsilon}|Z_N(\eta')-Z(\eta')|.
	\end{align*}
	Hence, almost surely,
	\begin{align*}
		\limsup_{N\to\infty}\sup_{\eta\in\mathcal E}|Z_N(\eta)-Z(\eta)| \le 2e^{1/\tau}\varepsilon.
	\end{align*}
	Since $\varepsilon>0$ is arbitrary, we conclude
	\begin{align}
		\sup_{\eta\in\mathcal E}|Z_N(\eta)-Z(\eta)|\xrightarrow{\mathrm{a.s.}}0. \label{eq:uniform-Z-app}
	\end{align}

	Using the same Lipschitz argument, for any $\eta,\eta'\in\mathcal E$ and $\gamma\in\mathcal S^{d-1}$,
	\begin{align*}
		\|\gamma e^{\eta^\top\gamma}-\gamma e^{(\eta')^\top\gamma}\| \le e^{1/\tau}\|\eta-\eta'\|.
	\end{align*}
	Therefore, both $M_N$ and $M$ are Lipschitz on $\mathcal E$ with deterministic Lipschitz constant at most $e^{1/\tau}$:
	\begin{align*}
		\|M_N(\eta)-M_N(\eta')\| \le e^{1/\tau}\|\eta-\eta'\|, \qquad \|M(\eta)-M(\eta')\| \le e^{1/\tau}\|\eta-\eta'\|.
	\end{align*}
	Applying the same finite-net argument componentwise gives
	\begin{align}
		\sup_{\eta\in\mathcal E}\|M_N(\eta)-M(\eta)\|\xrightarrow{\mathrm{a.s.}}0. \label{eq:uniform-M-app}
	\end{align}

	Since
	\begin{align*}
		\|\gamma e^{\eta^\top\gamma}\|\le e^{1/\tau} \qquad \text{for all }\eta\in\mathcal E,\ \gamma\in\mathcal S^{d-1},
	\end{align*}
	dominated convergence justifies differentiation under the expectation, so
	\begin{align*}
		\nabla Z(\eta)=\mathbb E_q[\gamma e^{\eta^\top\gamma}]=M(\eta).
	\end{align*}
	Hence,
	\begin{align*}
		\nabla \kappa(\eta)=\frac{\nabla Z(\eta)}{Z(\eta)}=\frac{M(\eta)}{Z(\eta)}, \qquad \nabla \widehat\kappa_N(\eta)=\frac{M_N(\eta)}{Z_N(\eta)}.
	\end{align*}

	By \cref{eq:uniform-Z-app}, $Z_N\to Z$ uniformly almost surely on $\mathcal E$. Also, $Z(\eta)>0$ for every $\eta$, and $Z$ is continuous on compact $\mathcal E$, so
	\begin{align*}
		m:=\inf_{\eta\in\mathcal E} Z(\eta)>0.
	\end{align*}
	Therefore, almost surely for all sufficiently large $N$,
	\begin{align*}
		\inf_{\eta\in\mathcal E} Z_N(\eta)\ge \frac m2>0.
	\end{align*}
	Because $\log$ is Lipschitz on $[m/2,\infty)$, \Cref{eq:uniform-Z-app} implies
	\begin{align}
		\sup_{\eta\in\mathcal E}|\widehat\kappa_N(\eta)-\kappa(\eta)| \xrightarrow{\mathrm{a.s.}}0. \label{eq:uniform-kappa-app}
	\end{align}

	Using \Cref{eq:uniform-Z-app} and \Cref{eq:uniform-M-app}, together with the lower bound on the denominators, we obtain
	\begin{align}
		\sup_{\eta\in\mathcal E}\|\nabla \widehat\kappa_N(\eta)-\nabla\kappa(\eta)\| \xrightarrow{\mathrm{a.s.}}0. \label{eq:uniform-grad-app}
	\end{align}

	Recall that
	\begin{align*}
		B_\kappa(\eta_2\|\eta_1) = \kappa(\eta_2)-\kappa(\eta_1)-\nabla\kappa(\eta_1)^\top(\eta_2-\eta_1) = \kappa(\eta_2)-\kappa(\eta_1)+(\eta_1-\eta_2)^\top \nabla\kappa(\eta_1).
	\end{align*}
	Hence,
	\begin{align*}
		&\sup_{\eta_1,\eta_2\in\mathcal E} \left| d_{\mathrm{SM}}(\tau\eta_1\|\tau\eta_2\mid\Gamma_N) - B_\kappa(\eta_2\|\eta_1) \right| \\
		&\le 2\sup_{\eta\in\mathcal E}|\widehat\kappa_N(\eta)-\kappa(\eta)| + \sup_{\eta_1,\eta_2\in\mathcal E} \left|(\eta_1-\eta_2)^\top \big(\nabla\widehat\kappa_N(\eta_1)-\nabla\kappa(\eta_1)\big)\right|.
	\end{align*}
	Since $\mathcal E=\tau^{-1}\mathcal S^{d-1}$,
	\begin{align*}
		\sup_{\eta_1,\eta_2\in\mathcal E}\|\eta_1-\eta_2\| \le \frac{2}{\tau}.
	\end{align*}
	Therefore,
	\begin{align*}
		&\sup_{\eta_1,\eta_2\in\mathcal E} \left| d_{\mathrm{SM}}(\tau\eta_1\|\tau\eta_2\mid\Gamma_N) - B_\kappa(\eta_2\|\eta_1) \right| \\
		& \quad\quad\quad\quad\le 2\sup_{\eta\in\mathcal E}|\widehat\kappa_N(\eta)-\kappa(\eta)| + \frac{2}{\tau}\sup_{\eta\in\mathcal E} \|\nabla\widehat\kappa_N(\eta)-\nabla\kappa(\eta)\|.
	\end{align*}
	By \Cref{eq:uniform-kappa-app} and \Cref{eq:uniform-grad-app}, the right-hand side converges almost surely to zero. Replacing $\eta_a=\lambda_a/\tau$ gives
	\begin{align*}
		\sup_{\lambda_1,\lambda_2\in\mathcal S^{d-1}} \left| d_{\mathrm{SM}}(\lambda_1\|\lambda_2\mid\Gamma_N) - B_\kappa\left(\frac{\lambda_2}{\tau}\Big\|\frac{\lambda_1}{\tau}\right) \right| \xrightarrow[N\to\infty]{\mathrm{a.s.}}0.
	\end{align*}

	This proves the claim.
\end{proof}

\section{Structured approximations to the population anchor geometry}
\label{app:latent_expfam}

The population anchor divergence is determined by the anchor distribution through its cumulant generating function. To obtain tractable estimators of this geometry, we consider structured parametric approximations to the anchor distribution. These models should be understood as pragmatic structured surrogates for the population anchor geometry, not as literal distributional models of normalized contrastive embeddings. The Gaussian and Gaussian-mixture models place mass off the unit sphere, while the latent low-rank models additionally idealize the semantically relevant variation as concentrated in a lower-dimensional subspace. We use these models because they yield tractable geometries with useful statistical tradeoffs in fidelity, robustness, and sample efficiency.

\subsection{General framework}
\label{sec:latent-expfam}

We begin with a latent semantic variable $z \in \mathcal Z \subseteq \mathbb R^k$, representing text-side semantic information, and a corresponding latent natural parameter $\eta \in \mathcal H \subseteq \mathbb R^k$, representing the image-side semantic parameter. Given a base measure $q_Z$ on $\mathcal Z$, we let $\eta$ parameterize $z$ through the exponential family
\begin{align}
	p(dz\mid \eta) = \exp\!\left(\eta^\top z-\kappa_Z(\eta)\right)q_Z(dz),
\label{eq:latent_expfam}
\end{align}
whose cumulant generating function is
\begin{align}
	\kappa_Z(\eta) := \log \int \exp(\eta^\top z)\,q_Z(dz).
\label{eq:latent_cgf}
\end{align}
The associated latent semantic geometry is the Bregman divergence
\begin{align}
	d_Z(\eta_1\|\eta_2) := B_{\kappa_Z}(\eta_2\|\eta_1).
\label{eq:latent_bregman}
\end{align}

To relate this latent construction to the anchor divergence used in the main text, let
\begin{align}
	\gamma = T(z),
\label{eq:latent_to_ambient}
\end{align}
where $T$ maps latent text-side semantics into ambient text embeddings. $\lambda$ is then defined as the reparameterization of $\eta$ so that $\lambda/\tau$ is the natural parameter for the exponential family with sufficient statistic $\gamma$, aligning with the CLIP scoring rule. We consider two cases:

\paragraph{Identity case.}
When $T=\mathrm{Id}$, we have $z=\gamma$, so 
\begin{align}
	p(d\gamma\mid \eta) = \exp\!\left(\eta^\top \gamma-\kappa_Z(\eta)\right)q_Z(d\gamma),
\label{eq:identity_readout}
\end{align}
and hence $\frac{\lambda}{\tau} = \eta$. In this case, \Cref{eq:latent_cgf} reduces to the cumulant generating function of the anchor distribution, and \Cref{eq:latent_bregman} recovers the population anchor divergence from the main text.

\paragraph{Low-rank affine case.}
For the latent low-rank models, we take
\begin{align}
	T(z)=\mu+Uz,
\label{eq:affine_latent_map}
\end{align}
where $U\in\mathbb R^{d\times k}$ has orthonormal columns. Thus, latent semantics are realized in the affine subspace $\mu+\operatorname{Im}(U)\subset\mathbb R^d$. Since $T$ is injective onto this subspace, the latent exponential family induces a pushed-forward family on ambient anchors. Writing $q=T_\# q_Z$ for the pushforward base measure, we obtain
\begin{align}
	p(d\gamma\mid \eta) = \exp\!\left(\eta^\top U^\top(\gamma-\mu)-\kappa_Z(\eta)\right)q(d\gamma).
\label{eq:gamma_given_eta}
\end{align}
To align this family with the CLIP score, we parameterize the image side by
\begin{align}
	\frac{\lambda}{\tau}=U\eta.
\label{eq:lambda_from_eta}
\end{align}
So the pushed-forward family can be written in the score-aligned form
\begin{align}
	p(d\gamma\mid \lambda) = \exp\!\left(\frac{1}{\tau}\lambda^\top \gamma - \Bigl[\kappa_Z\!\left(\frac{1}{\tau}U^\top\lambda\right) + \frac{1}{\tau}\lambda^\top\mu\Bigr]\right)q(d\gamma).
\label{eq:gamma_given_lambda}
\end{align}

The relation $\lambda/\tau=U\eta$ defines an exact inverse only on the semantic subspace $\operatorname{Im}(U)$. Because the low-rank models are idealized approximations, general ambient image embeddings need not lie exactly in this subspace, so we use a model-specific extended inverse or projection map $\eta(\lambda)$. The induced image--image semantic divergence is then defined by pulling back the latent geometry:
\begin{align}
	d(\lambda_1\|\lambda_2) := d_Z\bigl(\eta(\lambda_1)\|\eta(\lambda_2)\bigr) = B_{\kappa_Z}\bigl(\eta(\lambda_2)\|\eta(\lambda_1)\bigr).
\label{eq:pullback_geometry}
\end{align}
Thus the semantic geometry is computed in latent space, while ambient text and image embeddings provide its realization and parameterization.

\subsection{Gaussian anchor model}
\label{sec:app_gaussian_model}

We first recover the ambient-space Gaussian model as the identity case of the general framework. Take
\begin{align}
	z \in \mathbb R^d,\qquad q_Z=\mathcal N(\mu,\Sigma),\qquad T=\mathrm{Id},
\label{eq:gaussian_relations_app}
\end{align}
so that $z=\gamma$. By the identity-case discussion in \Cref{sec:latent-expfam}, the image-side natural parameter is
\begin{align}
	\eta=\frac{\lambda}{\tau}.
\label{eq:gaussian_readout_lambda}
\end{align}
Thus the Gaussian anchor model is simply the case in which the latent semantic space coincides with the ambient anchor space.

\begin{restatable}[Gaussian anchor model]{proposition}{gaussian}
\label{prop:gaussian_metric}
For the Gaussian anchor model,
\begin{equation}
	d_{\mathrm{G}}(\lambda_1\|\lambda_2) = \frac{1}{2\tau^2}(\lambda_1-\lambda_2)^\top \Sigma(\lambda_1-\lambda_2).
\label{eq:gaussian_metric_main}
\end{equation}
\end{restatable}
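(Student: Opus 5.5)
The plan is a direct computation. By the identity-case discussion in \Cref{sec:latent-expfam}, the Gaussian anchor divergence is the anchor divergence of \Cref{def:anchor_divergence} with $q=\mathcal N(\mu,\Sigma)$. In other words,
\[
d_{\mathrm{G}}(\lambda_1\|\lambda_2)=B_{\kappa_q}(\eta_2\|\eta_1), \qquad \eta_a=\lambda_a/\tau .
\]
So the two steps are to compute $\kappa_q$ in closed form and then evaluate the Bregman divergence of a quadratic.

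\textbf{Step 1 (the CGF).} For $\gamma\sim\mathcal N(\mu,\Sigma)$, the standard moment generating function gives
\[
\int \exp(\eta^\top\gamma)\,q(d\gamma)=\exp\!\left(\mu^\top\eta+\tfrac12\eta^\top\Sigma\eta\right),
\]
which is finite for every $\eta\in\mathbb R^d$. Hence $\kappa_q(\eta)=\mu^\top\eta+\tfrac12\eta^\top\Sigma\eta$, matching \Cref{tab:parametric_models}, and $\nabla\kappa_q(\eta)=\mu+\Sigma\eta$. I would verify the moment generating function by completing the square in the Gaussian density, or simply cite it as standard.

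\textbf{Step 2 (the Bregman divergence).} Plug into
\[
B_{\kappa_q}(\eta_2\|\eta_1)=\kappa_q(\eta_2)-\kappa_q(\eta_1)-\nabla\kappa_q(\eta_1)^\top(\eta_2-\eta_1).
\]
The linear terms in $\mu$ cancel exactly. What remains is
\[
\tfrac12\eta_2^\top\Sigma\eta_2-\tfrac12\eta_1^\top\Sigma\eta_1-\eta_1^\top\Sigma(\eta_2-\eta_1),
\]
and by symmetry of $\Sigma$ this collapses to $\tfrac12(\eta_2-\eta_1)^\top\Sigma(\eta_2-\eta_1)$. Substituting $\eta_a=\lambda_a/\tau$ yields $\frac{1}{2\tau^2}(\lambda_1-\lambda_2)^\top\Sigma(\lambda_1-\lambda_2)$, which is the claimed \Cref{eq:gaussian_metric_main}. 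The result is symmetric in $\lambda_1,\lambda_2$, as expected for a quadratic potential.

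\textbf{Main point of care.} There is no real obstacle here, only a technicality. The definition of the Bregman divergence in the paper asks for strict convexity, which holds only when $\Sigma\succ0$. For a singular sample covariance, $\kappa_q$ is merely convex, but the Bregman formula still makes sense and the computation is unchanged. I would note this explicitly rather than assume invertibility. I would also note that the Gaussian places mass off the sphere, so \Cref{thm:softmax-to-bregman} is not being invoked; we only use \Cref{def:anchor_divergence}, which allows distributions on all of $\mathbb R^d$.
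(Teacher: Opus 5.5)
Your proposal is correct and follows the paper's proof essentially line for line: compute the Gaussian CGF $\kappa(\eta)=\mu^\top\eta+\tfrac12\eta^\top\Sigma\eta$ with gradient $\mu+\Sigma\eta$, evaluate $B_{\kappa}(\eta_2\|\eta_1)=\tfrac12(\eta_2-\eta_1)^\top\Sigma(\eta_2-\eta_1)$, and substitute $\eta_a=\lambda_a/\tau$. Your remarks on singular $\Sigma$ and on not needing \Cref{thm:softmax-to-bregman} are sensible additions that the paper leaves implicit.
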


\begin{proof}
For $q_Z=\mathcal N(\mu,\Sigma)$, the cumulant generating function is
\begin{align}
	\kappa_Z(\eta) &= \log \int \exp(\eta^\top z)\,\mathcal N(dz;\mu,\Sigma) \\
	&= \mu^\top \eta + \frac{1}{2}\eta^\top\Sigma \eta.
\label{eq:gaussian_kappa_app}
\end{align}
Hence
\begin{align}
	\nabla\kappa_Z(\eta)=\mu+\Sigma\eta.
\end{align}
Substituting into \Cref{eq:latent_bregman} gives
\begin{align}
	d_Z(\eta_1\|\eta_2) &= B_{\kappa_Z}(\eta_2\|\eta_1) \\
	&= \frac{1}{2}(\eta_2-\eta_1)^\top \Sigma (\eta_2-\eta_1).
\label{eq:gaussian_bregman_app}
\end{align}
Using $\eta(\lambda)=\lambda/\tau$ in \Cref{eq:pullback_geometry},
\begin{align}
	d_{\mathrm G}(\lambda_1\|\lambda_2) &= d_Z\!\left(\frac{\lambda_1}{\tau}\Big\|\frac{\lambda_2}{\tau}\right) \\
	&= \frac{1}{2\tau^2}(\lambda_1-\lambda_2)^\top\Sigma(\lambda_1-\lambda_2),
\end{align}
which proves \Cref{eq:gaussian_metric_main}.
\end{proof}

\subsection{Latent Gaussian model}
\label{sec:app_latent_gaussian_model}

We now impose Gaussian structure only on a lower-dimensional latent semantic variable. Take
\begin{align}
	z \in \mathbb R^k,\qquad q_Z=\mathcal N(0,I_k),\qquad T(z)=\mu+Uz,
\label{eq:latent_gaussian_base_app}
\end{align}
where $U\in\mathbb R^{d\times k}$ has orthonormal columns. By the low-rank affine construction in \Cref{sec:latent-expfam}, the image-side parameter satisfies
\begin{align}
	\frac{\lambda}{\tau}=U\eta,
\label{eq:latent_gaussian_lambda_from_eta}
\end{align}
and for a general ambient image embedding we use the canonical projection
\begin{align}
	\eta(\lambda)=\frac{1}{\tau}U^\top\lambda.
\label{eq:latent_gaussian_eta_of_lambda_app}
\end{align}
Thus the image--image divergence is obtained by pulling back the latent Gaussian geometry to the semantic subspace.

\begin{restatable}[Latent Gaussian model]{proposition}{latentgaussian}
\label{prop:latent_gaussian_metric}
For the latent Gaussian model with semantic subspace $U$,
\begin{equation}
	d_{\mathrm{LG}}(\lambda_1\|\lambda_2) = \frac{1}{2\tau^2}(\lambda_1-\lambda_2)^\top UU^\top(\lambda_1-\lambda_2).
\label{eq:latent_gaussian_metric_main}
\end{equation}
\end{restatable}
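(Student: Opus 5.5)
The proof mirrors that of \Cref{prop:gaussian_metric}, now carried out in the $k$-dimensional latent space and then pulled back through the projection $\eta(\lambda)=\tau^{-1}U^\top\lambda$.

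First, compute the latent cumulant generating function. Since $q_Z=\mathcal N(0,I_k)$, the standard Gaussian moment generating function gives
\begin{equation*}
	\kappa_Z(\eta)=\tfrac12\eta^\top\eta,
\end{equation*}
which is exactly the Gaussian formula \Cref{eq:gaussian_kappa_app} with $\mu=0$ and $\Sigma=I_k$. Its gradient is $\nabla\kappa_Z(\eta)=\eta$.

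Next, substitute into the definition \Cref{eq:latent_bregman}. The Bregman divergence of a pure quadratic is half the squared difference:
\begin{equation*}
	d_Z(\eta_1\|\eta_2)=B_{\kappa_Z}(\eta_2\|\eta_1)=\tfrac12\|\eta_2-\eta_1\|^2 .
\end{equation*}
It is worth noting that the affine offset $\mu$ in $T$ does not enter here. It only contributes the linear term $\tau^{-1}\lambda^\top\mu$ in the score-aligned log-partition \Cref{eq:gamma_given_lambda}, and the pullback \Cref{eq:pullback_geometry} is defined through $\kappa_Z$ alone, so that term never appears.

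Finally, apply the pullback \Cref{eq:pullback_geometry} with \Cref{eq:latent_gaussian_eta_of_lambda_app}. This gives $\eta(\lambda_2)-\eta(\lambda_1)=\tau^{-1}U^\top(\lambda_2-\lambda_1)$, and therefore
\begin{equation*}
	d_{\mathrm{LG}}(\lambda_1\|\lambda_2)=\frac{1}{2\tau^2}(\lambda_1-\lambda_2)^\top UU^\top(\lambda_1-\lambda_2),
\end{equation*}
which is \Cref{eq:latent_gaussian_metric_main}.

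There is no real obstacle in this argument. The only care needed is to use the extended map $\eta(\lambda)=\tau^{-1}U^\top\lambda$ rather than attempting to invert $\lambda/\tau=U\eta$ on all of $\mathbb R^d$, since that inversion is exact only on $\operatorname{Im}(U)$.
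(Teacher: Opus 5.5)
Your proposal is correct and follows the paper's proof step for step: compute $\kappa_Z(\eta)=\tfrac12\eta^\top\eta$ for the standard Gaussian base measure, reduce the Bregman divergence to $\tfrac12\|\eta_2-\eta_1\|_2^2$, and pull back through $\eta(\lambda)=\tau^{-1}U^\top\lambda$. Your remark that the affine offset $\mu$ never enters the pulled-back geometry is correct, and it is a point the paper leaves implicit.
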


\begin{proof}
For the standard Gaussian base measure on $\mathbb R^k$,
\begin{align}
	\kappa_Z(\eta) &= \log \int \exp(\eta^\top z)\,\mathcal N(dz;0,I_k) \\
	&= \frac{1}{2}\eta^\top\eta.
\label{eq:latent_gaussian_kappa_app}
\end{align}
Hence
\begin{align}
	\nabla\kappa_Z(\eta)=\eta,
\end{align}
and therefore
\begin{align}
	d_Z(\eta_1\|\eta_2) = B_{\kappa_Z}(\eta_2\|\eta_1) = \frac{1}{2}\|\eta_2-\eta_1\|_2^2.
\label{eq:latent_gaussian_bregman_app}
\end{align}
Using \Cref{eq:pullback_geometry} together with \Cref{eq:latent_gaussian_eta_of_lambda_app},
\begin{align}
	d_{\mathrm{LG}}(\lambda_1\|\lambda_2) &= \frac{1}{2}\left\|\frac{1}{\tau}U^\top\lambda_2-\frac{1}{\tau}U^\top\lambda_1\right\|_2^2 \\
	&= \frac{1}{2\tau^2}\|U^\top(\lambda_2-\lambda_1)\|_2^2 \\
	&= \frac{1}{2\tau^2}(\lambda_1-\lambda_2)^\top UU^\top(\lambda_1-\lambda_2),
\end{align}
which proves \Cref{eq:latent_gaussian_metric_main}.
\end{proof}

\subsection{Latent spherical model}
\label{sec:app_latent_spherical_model}

The latent spherical model keeps only directional semantic variation within the same low-dimensional semantic subspace. We take
\begin{align}
	z \in \mathbb S^{k-1}, \qquad \eta \in \frac{1}{\tau}\mathbb S^{k-1}, \qquad q_Z=\mathrm{Unif}(\mathbb S^{k-1}), \qquad T(z)=\mu+Uz,
\label{eq:latent_spherical_base_app}
\end{align}
where $U\in\mathbb R^{d\times k}$ has orthonormal columns. On the semantic subspace, the low-rank affine construction in \Cref{sec:latent-expfam} gives
\begin{align}
	\frac{\lambda}{\tau}=U\eta.
\label{eq:latent_spherical_lambda_from_eta}
\end{align}
For a general ambient image embedding, we extend this parameterization by retaining the direction of its projection while fixing the latent radius to $1/\tau$:
\begin{align}
	\eta(\lambda):=\frac{1}{\tau}\frac{U^\top\lambda}{\|U^\top\lambda\|_2}, \qquad \text{whenever } U^\top\lambda\neq 0.
\label{eq:latent_spherical_eta_of_lambda_app}
\end{align}
Thus the image--image divergence is obtained by pulling back the latent directional geometry to the semantic subspace.

\begin{restatable}[Latent spherical model]{proposition}{latentspherical}
\label{prop:latent_spherical_metric}
For the latent spherical model with semantic subspace $U$, and assuming $U^\top\lambda_1\neq 0$ and $U^\top\lambda_2\neq 0$,
\begin{equation}
	d_{\mathrm{LS}}(\lambda_1\|\lambda_2)=\frac{1}{\tau}A_k\left(\frac{1}{\tau}\right)\left(1-\frac{(U^\top\lambda_1)^\top(U^\top\lambda_2)}{\|U^\top\lambda_1\|_2\,\|U^\top\lambda_2\|_2}\right),
\label{eq:latent_spherical_metric_main}
\end{equation}
where
\begin{align*}
	A_k(r)=\frac{I_{k/2}(r)}{I_{k/2-1}(r)},
\end{align*}
and $I_\nu$ denotes the modified Bessel function of the first kind.
\end{restatable}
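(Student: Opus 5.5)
The plan is to apply the pullback formula \Cref{eq:pullback_geometry} directly, with the extended inverse \Cref{eq:latent_spherical_eta_of_lambda_app}. Write $\eta_a:=\eta(\lambda_a)$ for $a=1,2$. Both points lie on the sphere $\frac1\tau\mathbb S^{k-1}$, and
\begin{equation*}
	\eta_1^\top\eta_2=\frac{1}{\tau^2}\,c,\qquad c:=\frac{(U^\top\lambda_1)^\top(U^\top\lambda_2)}{\|U^\top\lambda_1\|_2\,\|U^\top\lambda_2\|_2}.
\end{equation*}
So it suffices to compute $B_{\kappa_Z}(\eta_2\|\eta_1)$ for $\eta_1,\eta_2$ of common norm $r=1/\tau$, where $\kappa_Z$ is the CGF \Cref{eq:latent_cgf} of $q_Z=\mathrm{Unif}(\mathbb S^{k-1})$. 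I will assume $k\ge 2$, so that the Bessel expressions make sense.

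The first step is to show that $\kappa_Z$ is radial, $\kappa_Z(\eta)=g(\|\eta\|)$. This follows from rotation invariance of the uniform measure. Concretely, I would use the von Mises--Fisher normalizer
\begin{equation*}
	\int_{\mathbb S^{k-1}} e^{\eta^\top z}\,q_Z(dz)=\Gamma(k/2)\,(r/2)^{1-k/2}\,I_{k/2-1}(r),\qquad r=\|\eta\|,
\end{equation*}
which comes from integrating in the polar angle relative to $\eta$ and matching the integral representation of $I_\nu$. Radiality already gives $\kappa_Z(\eta_2)=\kappa_Z(\eta_1)$, so the Bregman divergence reduces to $-\nabla\kappa_Z(\eta_1)^\top(\eta_2-\eta_1)$.

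The second step is the gradient. For a radial function, $\nabla\kappa_Z(\eta)=g'(r)\,\eta/r$. Using the Bessel identity $\frac{d}{dr}\bigl[r^{-\nu}I_\nu(r)\bigr]=r^{-\nu}I_{\nu+1}(r)$ with $\nu=k/2-1$, one gets
\begin{equation*}
	g'(r)=\frac{I_{k/2}(r)}{I_{k/2-1}(r)}=A_k(r).
\end{equation*}
Then
\begin{equation*}
	B_{\kappa_Z}(\eta_2\|\eta_1)=\frac{A_k(r)}{r}\bigl(\|\eta_1\|^2-\eta_1^\top\eta_2\bigr)=\frac{A_k(r)}{r}\,r^2(1-c)=r\,A_k(r)(1-c).
\end{equation*}
Setting $r=1/\tau$ yields \Cref{eq:latent_spherical_metric_main}. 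The result is symmetric in $\lambda_1,\lambda_2$, so the argument-ordering convention in \Cref{eq:pullback_geometry} does not matter.

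The main obstacle is the Bessel bookkeeping: deriving the vMF normalizer and getting the index shift right so that the logarithmic derivative comes out as exactly $I_{k/2}/I_{k/2-1}$. Strictly, $\kappa_Z$ is only needed near the sphere of radius $1/\tau$. I would nonetheless define it on all of $\mathbb R^k$ via the formula above, so that the gradient in the Bregman definition is the genuine ambient gradient, which is smooth there since $r^{-\nu}I_\nu(r)$ is an entire even function.
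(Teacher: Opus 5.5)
Your proof is correct and follows the paper's argument. Radiality of $\kappa_Z$ gives $\kappa_Z(\eta_1)=\kappa_Z(\eta_2)$ on the sphere of radius $1/\tau$, the radial gradient $A_k(r)\eta/r$ reduces the Bregman divergence to $rA_k(r)(1-\eta_1^\top\eta_2/r^2)$, and substituting the normalized projections $\eta(\lambda_a)$ gives the claim; you go slightly further than the paper by deriving the von Mises--Fisher normalizer and the identity $g'(r)=I_{k/2}(r)/I_{k/2-1}(r)$, which the paper only asserts (your $k\ge 2$ restriction is harmless but unneeded, since the same formulas hold for $k=1$ with $A_1=\tanh$).
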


\begin{proof}
For the uniform base measure on $\mathbb S^{k-1}$, the cumulant generating function depends only on the norm:
\begin{align}
	\kappa_Z(\eta)=\psi_k(\|\eta\|_2),
\label{eq:latent_spherical_kappa_app}
\end{align}
where $\psi_k$ is the usual von Mises--Fisher log-normalizer. Its gradient is radial:
\begin{align}
	\nabla\kappa_Z(\eta)=A_k(\|\eta\|_2)\frac{\eta}{\|\eta\|_2}.
\label{eq:latent_spherical_gradkappa_app}
\end{align}
Now let $\eta_1,\eta_2\in\frac{1}{\tau}\mathbb S^{k-1}$ and write $r=1/\tau$. Since $\|\eta_1\|_2=\|\eta_2\|_2=r$,
\begin{align}
	\kappa_Z(\eta_1)=\kappa_Z(\eta_2)=\psi_k(r), \qquad \nabla\kappa_Z(\eta_1)=A_k(r)\frac{\eta_1}{r}.
\label{eq:latent_spherical_equal_radius_app}
\end{align}
Therefore,
\begin{align}
	d_Z(\eta_1\|\eta_2)&=B_{\kappa_Z}(\eta_2\|\eta_1) \\
	&=\kappa_Z(\eta_2)-\kappa_Z(\eta_1)-\nabla\kappa_Z(\eta_1)^\top(\eta_2-\eta_1) \\
	&=-\frac{A_k(r)}{r}\eta_1^\top(\eta_2-\eta_1) \\
	&=rA_k(r)\left(1-\frac{\eta_1^\top\eta_2}{r^2}\right).
\label{eq:latent_spherical_bregman_app}
\end{align}
Using \Cref{eq:pullback_geometry} together with \Cref{eq:latent_spherical_eta_of_lambda_app},
\begin{align}
	d_{\mathrm{LS}}(\lambda_1\|\lambda_2)&=\frac{1}{\tau}A_k\left(\frac{1}{\tau}\right)\left(1-\tau^2\eta(\lambda_1)^\top\eta(\lambda_2)\right) \\
	&=\frac{1}{\tau}A_k\left(\frac{1}{\tau}\right)\left(1-\frac{(U^\top\lambda_1)^\top(U^\top\lambda_2)}{\|U^\top\lambda_1\|_2\,\|U^\top\lambda_2\|_2}\right),
\end{align}
which proves \Cref{eq:latent_spherical_metric_main}.
\end{proof}

\subsection{Gaussian mixture model}
\label{sec:app_gmm_model}

Finally, we obtain a more expressive ambient-space approximation by modeling the anchor distribution as a Gaussian mixture. Take
\begin{align}
	z \in \mathbb R^d, \qquad q_Z = \sum_{m=1}^M \pi_m\,\mathcal N(\mu_m,\Sigma_m), \qquad T=\mathrm{Id},
\label{eq:gmm_relations_app}
\end{align}
where $\pi_m>0$, $\sum_{m=1}^M \pi_m=1$, and $\Sigma_m\succ 0$. Thus, $z=\gamma$. By the identity-case construction in \Cref{sec:latent-expfam}, the image-side natural parameter is
\begin{align}
	\eta=\frac{\lambda}{\tau}.
\label{eq:gmm_readout_lambda}
\end{align}
This yields a multimodal ambient approximation to the population anchor geometry.

\begin{restatable}[Gaussian mixture anchor model]{proposition}{gmm}
\label{prop:gmm_metric}
For the Gaussian mixture anchor model,
\begin{equation}
	d_{\mathrm{GMM}}(\lambda_1\|\lambda_2) = B_{\kappa_{\mathrm{GMM}}}\!\left(\frac{\lambda_2}{\tau}\Big\|\frac{\lambda_1}{\tau}\right),
\label{eq:gmm_metric_main}
\end{equation}
where
\begin{equation}
	\kappa_{\mathrm{GMM}}(\eta) = \log\sum_{m=1}^M \pi_m \exp\!\left(\mu_m^\top \eta+\frac{1}{2}\eta^\top\Sigma_m\eta\right).
\label{eq:gmm_kappa_main}
\end{equation}
\end{restatable}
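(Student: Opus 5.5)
The plan is to read the result directly off the identity case of the general framework in \Cref{sec:latent-expfam}. First, I compute the cumulant generating function of the mixture base measure $q_Z=\sum_m \pi_m\mathcal N(\mu_m,\Sigma_m)$. The integral $\int \exp(\eta^\top z)\,q_Z(dz)$ is linear in the measure, so it splits into a weighted sum of the moment generating functions of the individual Gaussian components:
\begin{align*}
	\int e^{\eta^\top z}\,q_Z(dz)=\sum_{m=1}^M \pi_m\int e^{\eta^\top z}\,\mathcal N(dz;\mu_m,\Sigma_m).
\end{align*}
Each term is $\exp(\mu_m^\top\eta+\tfrac12\eta^\top\Sigma_m\eta)$, exactly as in the proof of \Cref{prop:gaussian_metric} (\Cref{eq:gaussian_kappa_app}). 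Taking the logarithm gives \Cref{eq:gmm_kappa_main}, and this holds for every $\eta\in\mathbb R^d$ since each Gaussian moment generating function is finite everywhere.

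Second, I check that $\kappa_{\mathrm{GMM}}$ is a legitimate Bregman generator, i.e.\ differentiable and strictly convex, so that $B_{\kappa_{\mathrm{GMM}}}$ is well defined.
\begin{itemize}
	\item \emph{Differentiability.} $\kappa_{\mathrm{GMM}}$ is a log-sum-exp of smooth functions, so it is smooth. Its gradient is
	\begin{align*}
		\nabla\kappa_{\mathrm{GMM}}(\eta)=\sum_m w_m(\eta)\,(\mu_m+\Sigma_m\eta), \qquad w_m(\eta)\propto \pi_m\exp\!\left(\mu_m^\top\eta+\tfrac12\eta^\top\Sigma_m\eta\right).
	\end{align*}
	\item \emph{Strict convexity.} Convexity is automatic for any CGF by H\"older's inequality. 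Strict convexity follows because the Hessian of a CGF is the covariance of $z$ under the tilted law $\exp(\eta^\top z-\kappa(\eta))\,q_Z(dz)$. That tilted law has a density with full support on $\mathbb R^d$, since each $\Sigma_m\succ0$, so its covariance is positive definite.
\end{itemize}

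Third, I apply the identity case: $T=\mathrm{Id}$ and $\eta(\lambda)=\lambda/\tau$ by \Cref{eq:gmm_readout_lambda}. Substituting into the pullback definition \Cref{eq:pullback_geometry} gives
\begin{align*}
	d_{\mathrm{GMM}}(\lambda_1\|\lambda_2)=B_{\kappa_{\mathrm{GMM}}}\!\left(\frac{\lambda_2}{\tau}\Big\|\frac{\lambda_1}{\tau}\right),
\end{align*}
which is \Cref{eq:gmm_metric_main}. Equivalently, this is just \Cref{def:anchor_divergence} applied with $q=q_Z$.

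There is no real obstacle. The only step needing care is the strict convexity argument via the positive-definite tilted covariance. Unlike the Gaussian case, the mixture gives no closed-form quadratic, so the statement stops at the Bregman form rather than simplifying further.
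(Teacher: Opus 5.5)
Your proposal is correct and matches the paper's proof step for step. You split the CGF by linearity into the component Gaussian moment generating functions, take the logarithm to get \Cref{eq:gmm_kappa_main}, and substitute $\eta(\lambda)=\lambda/\tau$ into the pullback \Cref{eq:pullback_geometry}. Your extra check that $\kappa_{\mathrm{GMM}}$ is smooth and strictly convex is sound: the tilted law is again a Gaussian mixture with components $\mathcal N(\mu_m+\Sigma_m\eta,\Sigma_m)$, so its covariance is positive definite. The paper's proof leaves this well-definedness point implicit.
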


\begin{proof}
Starting from \Cref{eq:gmm_relations_app},
\begin{align}
	\kappa_Z(\eta) &= \log \int \exp(\eta^\top z)\,q_Z(dz) \\
	&= \log \sum_{m=1}^M \pi_m \int \exp(\eta^\top z)\,\mathcal N(dz;\mu_m,\Sigma_m).
\end{align}
Each inner integral is the Gaussian MGF, so
\begin{align}
	\kappa_Z(\eta) = \log\sum_{m=1}^M \pi_m \exp\!\left(\mu_m^\top \eta+\frac{1}{2}\eta^\top\Sigma_m\eta\right),
\end{align}
which is exactly \Cref{eq:gmm_kappa_main}. Hence,
\begin{align}
	d_Z(\eta_1\|\eta_2) = B_{\kappa_{\mathrm{GMM}}}(\eta_2\|\eta_1).
\end{align}
Using $\eta(\lambda)=\lambda/\tau$ in \Cref{eq:pullback_geometry},
\begin{align}
	d_{\mathrm{GMM}}(\lambda_1\|\lambda_2) &= d_Z\!\left(\frac{\lambda_1}{\tau}\Big\|\frac{\lambda_2}{\tau}\right) \\
	&= B_{\kappa_{\mathrm{GMM}}}\!\left(\frac{\lambda_2}{\tau}\Big\|\frac{\lambda_1}{\tau}\right),
\end{align}
which proves \Cref{eq:gmm_metric_main}.
\end{proof}

\section{Parameter estimation for structured anchor models}
\label{app:param_estimation}

Given anchor embeddings $\{\gamma_i\}_{i=1}^n \subset \mathbb{R}^d$, we estimate the parameters of each structured anchor model directly from the anchor set.

\paragraph{Gaussian anchor model.}
For the Gaussian model, we fit the mean and covariance by maximum likelihood:
\begin{align}
\hat \mu &= \frac{1}{n}\sum_{i=1}^n \gamma_i, \\
\hat \Sigma &= \frac{1}{n}\sum_{i=1}^n (\gamma_i-\hat\mu)(\gamma_i-\hat\mu)^\top.
\end{align}
The fitted semantic divergence is obtained by substituting $\hat\Sigma$ into \Cref{eq:gaussian_metric_main}.

\paragraph{Low-rank models.}
For the latent Gaussian and latent spherical models, we estimate the semantic subspace $U \in \mathbb{R}^{d\times k}$ by the top-$k$ eigenvectors of the anchor covariance matrix
\begin{align}
\hat \Sigma = \frac{1}{n}\sum_{i=1}^n (\gamma_i-\bar\gamma)(\gamma_i-\bar\gamma)^\top,
\qquad
\bar\gamma = \frac{1}{n}\sum_{i=1}^n \gamma_i.
\end{align}
We choose $k$ to explain $95\%$ of the anchor covariance. The fitted divergences are then obtained by substituting $\hat U$ into \Cref{eq:latent_gaussian_metric_main} and \Cref{eq:latent_spherical_metric_main}, respectively.

\paragraph{Gaussian mixture model.}
For the Gaussian mixture model, we fit the mixture weights, means, and covariances,
\begin{align*}
\{(\pi_m,\mu_m,\Sigma_m)\}_{m=1}^M,
\end{align*}
by expectation--maximization (EM) on the anchor embeddings. The fitted parameters are then substituted into \Cref{eq:gmm_kappa_main} and \Cref{eq:gmm_metric_main}.

\section{Model-specific posterior and posterior-like updates for posterior anchor modeling}
\label{app:adaptation_appendix}

This appendix gives model-specific posterior and posterior-like updates for the posterior anchor modeling framework introduced in \Cref{sec:adaptation}. Broad anchors provide prior semantic structure for a parametric anchor model, precise task-aligned anchors refine that structure, and the resulting model summary induces an adapted geometry through  $B_{\kappa_{\hat\theta}}$.

\subsection{General adaptation framework}

Suppose we are given a broad anchor set
\begin{align*}
	\Gamma_{\mathrm{broad}} = \{\gamma^{\mathrm{broad}}_1,\dots,\gamma^{\mathrm{broad}}_m\}
\end{align*}
and a smaller set of precise task-aligned anchors
\begin{align*}
	\Gamma_{\mathrm{target}} = \{\gamma^{\mathrm{target}}_1,\dots,\gamma^{\mathrm{target}}_n\}.
\end{align*}
Let $\{q_\theta:\theta\in\Theta\}$ be a parametric family of anchor distributions together with model-specific prior structure on $\Theta$. A generic adaptation procedure is:

\begin{enumerate}
    \item fit the parametric family on $\Gamma_{\mathrm{broad}}$ to obtain a broad-anchor estimate $\theta_{\mathrm{broad}}$;

    \item choose hyperparameters or regularization structure $\Omega=\Omega(\theta_{\mathrm{broad}})$ so that the update is centered around the broad semantic structure encoded by $\theta_{\mathrm{broad}}$;

    \item combine this prior structure with the precise anchors to obtain a model-specific posterior or posterior-like update over $\theta$;

    \item use a model-specific summary $\hat\theta$ to define the adapted anchor model $q_{\hat\theta}$;

    \item induce the adapted geometry through
    \begin{align*}
    	\kappa_{\hat\theta}(\xi) = \log \int \exp(\xi^\top \gamma)\,q_{\hat\theta}(d\gamma), \qquad d_{\hat\theta}(\lambda_1\|\lambda_2) = B_{\kappa_{\hat\theta}}\left( \frac{\lambda_2}{\tau} \Big\| \frac{\lambda_1}{\tau} \right).
    \end{align*}
\end{enumerate}

We now specialize this framework to the model families used in the paper.

\subsection{Gaussian anchor model with Normal--Inverse-Wishart update}
\label{app:adapt_gaussian}

For the Gaussian anchor model,
\begin{align*}
	q_\theta(\gamma)=\mathcal N(\gamma;\mu,\Sigma), \qquad \theta=(\mu,\Sigma), \qquad \Sigma\succ 0.
\end{align*}
As shown in \Cref{sec:app_gaussian_model}, the induced geometry is
\begin{align*}
	d_{\mathrm G}(\lambda_1\|\lambda_2)
	=
	\frac{1}{2\tau^2}
	(\lambda_1-\lambda_2)^\top
	\Sigma
	(\lambda_1-\lambda_2),
\end{align*}
so adaptation corresponds to updating the covariance structure of the anchor distribution.

We place a Normal--Inverse-Wishart prior on $(\mu,\Sigma)$:
\begin{align*}
	\Sigma &\sim \mathcal{IW}(\Psi_0,\nu_0), \\
	\mu\mid \Sigma &\sim \mathcal N\!\left(\mu_0,\frac{1}{\kappa_0}\Sigma\right).
\end{align*}

The adaptation steps are:

\begin{enumerate}
    \item \textbf{Fit the broad Gaussian.} Fit a Gaussian model to the broad anchors:
    \begin{align*}
    	\bar\gamma_{\mathrm{broad}}
    	&=
    	\frac{1}{m}\sum_{i=1}^m \gamma_i^{\mathrm{broad}}, \\
    	S_{\mathrm{broad}}
    	&=
    	\sum_{i=1}^m
    	(\gamma_i^{\mathrm{broad}}-\bar\gamma_{\mathrm{broad}})
    	(\gamma_i^{\mathrm{broad}}-\bar\gamma_{\mathrm{broad}})^\top.
    \end{align*}
    This gives the broad-anchor estimate $(\mu_{\mathrm{broad}},\Sigma_{\mathrm{broad}})$.

    \item \textbf{Initialize the prior.} Use the broad-anchor estimate in an empirical-Bayes step:
    \begin{align*}
    	\mu_0 &= \bar\gamma_{\mathrm{broad}}, \\
    	\Psi_0 &= (\nu_0-d-1)\Sigma_{\mathrm{broad}},
    \end{align*}
    where $\kappa_0$ and $\nu_0$ control prior strength.

    \item \textbf{Update with precise anchors.} Given precise anchors $\Gamma_{\mathrm{target}}$, compute
    \begin{align*}
    	\bar\gamma_{\mathrm{target}}
    	&=
    	\frac{1}{n}\sum_{i=1}^n \gamma_i^{\mathrm{target}}, \\
    	S_{\mathrm{target}}
    	&=
    	\sum_{i=1}^n
    	(\gamma_i^{\mathrm{target}}-\bar\gamma_{\mathrm{target}})
    	(\gamma_i^{\mathrm{target}}-\bar\gamma_{\mathrm{target}})^\top.
    \end{align*}
    The posterior is again NIW with parameters
    \begin{align*}
    	\kappa_n &= \kappa_0+n, \\
    	\nu_n &= \nu_0+n, \\
    	\mu_n &= \frac{\kappa_0\mu_0+n\bar\gamma_{\mathrm{target}}}{\kappa_0+n}, \\
    	\Psi_n &= \Psi_0 + S_{\mathrm{target}} + \frac{\kappa_0 n}{\kappa_0+n} (\bar\gamma_{\mathrm{target}}-\mu_0)(\bar\gamma_{\mathrm{target}}-\mu_0)^\top.
    \end{align*}

    \item \textbf{Form the adapted model.} Use a posterior summary $(\hat\mu,\hat\Sigma)$, such as the posterior mean or MAP estimate, to define the adapted Gaussian anchor model
    \begin{align*}
    	q_{\hat\theta}(\gamma)=\mathcal N(\gamma;\hat\mu,\hat\Sigma).
    \end{align*}

    \item \textbf{Induce the adapted geometry.} The adapted geometry is
    \begin{align*}
    	d_{\mathrm G}^{\mathrm{adapt}}(\lambda_1\|\lambda_2)
    	=
    	\frac{1}{2\tau^2}
    	(\lambda_1-\lambda_2)^\top
    	\hat\Sigma
    	(\lambda_1-\lambda_2).
    \end{align*}
\end{enumerate}

\subsection{Latent Gaussian model with matrix Bingham update}
\label{app:adapt_latent_gaussian}

For the latent Gaussian model, the anchor distribution is determined by a low-dimensional semantic subspace. Concretely, we use the ambient surrogate
\begin{align*}
	q_\theta(\gamma)=\mathcal N(\gamma;\mu,UU^\top+\varepsilon I), \qquad \theta=(\mu,U),
\end{align*}
where $U\in\mathbb R^{d\times k}$ has orthonormal columns, $\mu\in\mathbb R^d$, and $\varepsilon>0$ is a small isotropic regularization used in implementation. As shown in \Cref{sec:app_latent_gaussian_model}:
\begin{align*}
	d_{\mathrm{LG}}(\lambda_1\|\lambda_2)
	=
	\frac{1}{2\tau^2}
	(\lambda_1-\lambda_2)^\top
	UU^\top
	(\lambda_1-\lambda_2).
\end{align*}
Thus, adaptation amounts to updating the subspace $U$.

We place a matrix Bingham prior on $U$ over the Stiefel manifold
\begin{align*}
	\mathrm{St}(d,k)=\{U\in\mathbb R^{d\times k}:U^\top U=I_k\},
\end{align*}
with density
\begin{align*}
	p(U\mid A)\propto \exp\!\bigl(\operatorname{tr}(U^\top A U)\bigr).
\end{align*}
Under the Gaussian covariance model above, this prior is conjugate for the subspace parameter $U$.

The adaptation steps are:

\begin{enumerate}
    \item \textbf{Fit the broad semantic subspace.} Compute the empirical covariance of the broad anchors and let $U_{\mathrm{broad}}$ be its top-$k$ eigenvectors:
    \begin{align*}
    	\widehat{\operatorname{Cov}}(\Gamma_{\mathrm{broad}})
    	=
    	\frac{1}{m}\sum_{i=1}^m
    	(\gamma_i^{\mathrm{broad}}-\bar\gamma_{\mathrm{broad}})
    	(\gamma_i^{\mathrm{broad}}-\bar\gamma_{\mathrm{broad}})^\top.
    \end{align*}

    \item \textbf{Initialize the Bingham prior.} Use the broad subspace to define
    \begin{align*}
    	A=\alpha\,U_{\mathrm{broad}}U_{\mathrm{broad}}^\top,
    \end{align*}
    where $\alpha>0$ controls prior strength.

    \item \textbf{Update with precise anchors.} Let
    \begin{align*}
    	\widehat{\operatorname{Cov}}(\Gamma_{\mathrm{target}})
    	=
    	\frac{1}{n}\sum_{i=1}^n
    	(\gamma_i^{\mathrm{target}}-\bar\gamma_{\mathrm{target}})
    	(\gamma_i^{\mathrm{target}}-\bar\gamma_{\mathrm{target}})^\top.
    \end{align*}
    Under the Gaussian covariance model, the posterior over $U$ remains in matrix-Bingham form, with updated parameter matrix
    \begin{align*}
    	A_{\mathrm{post}}
    	=
    	A+\beta\,\widehat{\operatorname{Cov}}(\Gamma_{\mathrm{target}}),
    \end{align*}
    where
    \begin{align*}
    	\beta
    	=
    	\frac{n}{2}\left(\frac{1}{\varepsilon}-\frac{1}{1+\varepsilon}\right).
    \end{align*}

    \item \textbf{Form the adapted model via MAP.} We use the MAP estimator under the Bingham posterior:
    \begin{align*}
    	U_{\mathrm{post}}
    	=
    	\arg\max_{U\in \mathrm{St}(d,k)}
    	\operatorname{tr}(U^\top A_{\mathrm{post}}U),
    \end{align*}
    which is given by the top-$k$ eigenvectors of $A_{\mathrm{post}}$.

    \item \textbf{Induce the adapted geometry.} The adapted geometry is
    \begin{align*}
    	d_{\mathrm{LG}}^{\mathrm{adapt}}(\lambda_1\|\lambda_2)
    	=
    	\frac{1}{2\tau^2}
    	(\lambda_1-\lambda_2)^\top
    	U_{\mathrm{post}}U_{\mathrm{post}}^\top
    	(\lambda_1-\lambda_2).
    \end{align*}
\end{enumerate}

\subsection{Latent spherical model with posterior-like Bingham update}
\label{app:adapt_latent_spherical}

For the latent spherical model, adaptation again reduces to updating the semantic subspace $U$. We use the same matrix-Bingham form as a tractable posterior-like update, driven by the broad-anchor subspace and the precise-anchor covariance. As in the latent Gaussian case, we initialize
\begin{align*}
	A=\alpha\,U_{\mathrm{broad}}U_{\mathrm{broad}}^\top
\end{align*}
from the broad-anchor subspace $U_{\mathrm{broad}}$, then combine this with the precise-anchor covariance to form
\begin{align*}
	A_{\mathrm{post}} = A + \beta\,\widehat{\operatorname{Cov}}(\Gamma_{\mathrm{target}}),
\end{align*}
where $\beta>0$ controls the influence of the precise anchors. The adapted subspace is the MAP estimate
\begin{align*}
	U_{\mathrm{post}}
	=
	\arg\max_{U\in\mathrm{St}(d,k)}
	\operatorname{tr}(U^\top A_{\mathrm{post}}U),
\end{align*}
given by the top-$k$ eigenvectors of $A_{\mathrm{post}}$. The adapted geometry is obtained by substituting $U_{\mathrm{post}}$ into the latent spherical divergence.

\subsection{Gaussian mixture model with Dirichlet--NIW update}
\label{app:adapt_gmm}

For the Gaussian mixture model, we parameterize the anchor distribution as
\begin{align*}
	q_\theta(d\gamma)
	=
	\sum_{m=1}^M \pi_m\,\mathcal N(d\gamma;\mu_m,\Sigma_m),
	\qquad
	\theta=\{(\pi_m,\mu_m,\Sigma_m)\}_{m=1}^M,
\end{align*}
where $\pi_m>0$, $\sum_{m=1}^M \pi_m=1$, and $\Sigma_m\succ 0$. The induced geometry is the Bregman divergence generated by
\begin{align*}
	\kappa_{\mathrm{GMM}}(\xi)
	=
	\log\sum_{m=1}^M
	\pi_m
	\exp\!\left(
	\mu_m^\top \xi + \frac{1}{2}\xi^\top \Sigma_m \xi
	\right).
\end{align*}
Unlike the single-Gaussian case, this geometry is generally nonquadratic and can represent multiple semantic modes.

We place a symmetric Dirichlet prior on the mixture weights and independent Normal--Inverse-Wishart priors on the component parameters:
\begin{align*}
	(\pi_1,\dots,\pi_M) &\sim \operatorname{Dir}(\alpha,\dots,\alpha), \\
	\Sigma_m &\sim \mathcal{IW}(\Psi_{0,m},\nu_{0,m}), \\
	\mu_m\mid \Sigma_m &\sim \mathcal N\!\left(\mu_{0,m},\frac{1}{\kappa_{0,m}}\Sigma_m\right),
	\qquad m=1,\dots,M.
\end{align*}

The adaptation steps are:

\begin{enumerate}
    \item \textbf{Fit the broad mixture.} Fit an $M$-component GMM to the broad anchors, producing
    \begin{align*}
    	\left\{
    	\pi_m^{\mathrm{broad}},\mu_m^{\mathrm{broad}},\Sigma_m^{\mathrm{broad}}
    	\right\}_{m=1}^M.
    \end{align*}

    \item \textbf{Initialize the prior.} Use the broad-anchor fit to set the component-specific hyperparameters
    \begin{align*}
    	\mu_{0,m} &= \mu_m^{\mathrm{broad}}, \\
    	\Psi_{0,m} &= (\nu_{0,m}-d-1)\Sigma_m^{\mathrm{broad}},
    \end{align*}
    while resetting the mixture weights to a symmetric prior
    \begin{align*}
    	(\pi_1,\dots,\pi_M)\sim \operatorname{Dir}(\alpha,\dots,\alpha).
    \end{align*}
    Thus, the broad anchors determine the semantic modes and local covariance scales, but not their prior prevalence.

    \item \textbf{Update with precise anchors using soft assignments.} Given precise anchors $\Gamma_{\mathrm{target}}$, compute responsibilities
    \begin{align*}
    	r_{im}=p(m\mid \gamma_i^{\mathrm{target}})
    \end{align*}
    under the current mixture model. Soft assignments allow precise anchors to refine multiple broad semantic modes simultaneously. Define weighted sufficient statistics
    \begin{align*}
    	N_m &= \sum_{i=1}^n r_{im}, \\
    	\bar\gamma_m &= \frac{1}{N_m}\sum_{i=1}^n r_{im}\gamma_i^{\mathrm{target}}, \\
    	S_m &= \sum_{i=1}^n r_{im}(\gamma_i^{\mathrm{target}}-\bar\gamma_m)(\gamma_i^{\mathrm{target}}-\bar\gamma_m)^\top.
    \end{align*}
    The updated hyperparameters are
    \begin{align*}
    	\alpha_m' &= \alpha + N_m, \\
    	\kappa_m' &= \kappa_{0,m}+N_m, \\
    	\nu_m' &= \nu_{0,m}+N_m, \\
    	\mu_m' &= \frac{\kappa_{0,m}\mu_{0,m}+N_m\bar\gamma_m}{\kappa_{0,m}+N_m}, \\
    	\Psi_m' &= \Psi_{0,m} + S_m + \frac{\kappa_{0,m}N_m}{\kappa_{0,m}+N_m}(\bar\gamma_m-\mu_{0,m})(\bar\gamma_m-\mu_{0,m})^\top.
    \end{align*}

    \item \textbf{Form the adapted model.} Use posterior means to define the adapted mixture parameters
    \begin{align*}
    	\hat\pi_m &= \frac{\alpha_m'}{\sum_{\ell=1}^M \alpha_\ell'}, \\
    	\hat\mu_m &= \mu_m', \\
    	\hat\Sigma_m &= \frac{\Psi_m'}{\nu_m'-d-1},
    \end{align*}
    provided $\nu_m'>d+1$. This gives the adapted anchor model
    \begin{align*}
    	q_{\hat\theta}(d\gamma)
    	=
    	\sum_{m=1}^M \hat\pi_m\,\mathcal N(d\gamma;\hat\mu_m,\hat\Sigma_m).
    \end{align*}

    \item \textbf{Induce the adapted geometry.} The adapted geometry is
    \begin{align*}
    	d_{\mathrm{GMM}}^{\mathrm{adapt}}(\lambda_1\|\lambda_2)
    	=
    	B_{\kappa_{\hat\theta}}\!\left(
    	\frac{\lambda_2}{\tau}\Big\|\frac{\lambda_1}{\tau}
    	\right),
    \end{align*}
    where
    \begin{align*}
    	\kappa_{\hat\theta}(\xi)
    	=
    	\log\sum_{m=1}^M
    	\hat\pi_m
    	\exp\!\left(
    	\hat\mu_m^\top \xi + \frac{1}{2}\xi^\top \hat\Sigma_m \xi
    	\right).
    \end{align*}
\end{enumerate}

\subsection{Discussion}

These model-specific updates instantiate the same principle: posterior anchor modeling updates a parametric anchor model using broad anchors as prior semantic structure and precise anchors as task-specific evidence, which in turn updates the induced Bregman geometry in image space. The models differ in the semantic structure they can transfer: a single global covariance in the Gaussian case, a shared low-dimensional semantic subspace in the low-rank case, and multiple reweighted semantic modes in the mixture case. This is precisely why richer models such as the GMM can support forms of semantic transfer that would not arise from simple projection heuristics.

\section{Implementation details}
\label{app:implementation}

\subsection{Datasets}
\label{app:datasets}

We evaluate on ImageNet, ImageNet-R, CUB-200, Sketchy, and CheXpert.

\paragraph{ImageNet.}
We use the standard training split as the retrieval gallery/keys and the validation split as queries. Relevance is defined by class agreement. ImageNet class-name anchors are formed from the class labels using the template ``a photo of a \{x\}''.

\paragraph{ImageNet-R.}
Queries and keys are both drawn from ImageNet-R, with each query removed from its own key set. Object relevance is defined by ImageNet class agreement, while style labels for the style-control experiments are parsed from the ImageNet-R filenames. Object-name anchors use the template ``a photo of a \{x\}'', and style anchors use ``a \{s\} rendition of an object''. For style retrieval we discard images that have ``misc'' style labels, since these are not well-defined styles and do not have corresponding anchors.

\paragraph{CUB-200.}
We use the standard train/test split, with training images as the keys and test images as queries. Relevance is defined by bird-class agreement. CUB anchors are formed from the class names using the same class-name template.

\paragraph{Sketchy.}
We use the Sketchy Extended Split 2 protocol for zero-shot sketch-based image retrieval, following \citet{yelamarthi2018zero}. The test set consists of 21 classes chosen to be absent from ImageNet-1K. We use sketches from these unseen classes as queries and photos from the same classes as the retrieval gallery; no training-class images are used. Relevance is defined by class agreement, and anchors are formed from the class names using the template ``a photo of a \{x\}''.

\paragraph{CheXpert.}
We evaluate multi-label retrieval on CheXpert using pathology labels as relevance targets. The official training split is used as keys and the validation split as queries. We discard the training images with uncertain labels. For each target pathology, we construct report-style text anchors describing that pathology. The exact LLM prompt is given in \Cref{app:chexpert_anchors}.

\paragraph{Synthetic collection for Figure~1.}
\Cref{fig:intro_semantic_neighborhoods} uses a synthetic image collection in which dog breed, background, and clothing are systematically varied. Images were generated using OpenAI's \texttt{gpt-image-2} model. The displayed neighbors are obtained directly by retrieval under cosine similarity and the corresponding anchor divergences.

\subsection{Models}
\label{app:models}

All natural-image experiments use MetaCLIP-2. All image and text embeddings are normalized to unit $\ell_2$ norm, and we use the model-provided logit scale, $\exp(\texttt{clip\_model.logit\_scale})$, as the inverse temperature.

For CheXpert, we use CheXzero and report results averaged over the 10 released model weights.

\subsection{Image-text inversion baseline}
\label{app:iti_baseline}

As a retrieval baseline, we use image-text inversion (ITI), which converts a query image into a text-side proxy by selecting the anchor with highest image--text compatibility. Given an anchor set $\Gamma=\{\gamma_1,\dots,\gamma_N\}$ and a query image embedding $\lambda_q$, we first choose
\begin{align}
i^\star(\lambda_q)
=
\arg\max_{1\le i\le N}\;
\lambda_q^\top \gamma_i.
\end{align}
We then take the corresponding anchor
\begin{align}
\tilde{\gamma}(\lambda_q)=\gamma_{i^\star(\lambda_q)}
\end{align}
as the inverted text representation of the query. Each gallery image embedding $\lambda_k$ is then scored by cosine similarity to this inferred text embedding:
\begin{align}
s_{\mathrm{ITI}}(\lambda_q,\lambda_k)
=
\tilde{\gamma}(\lambda_q)^\top \lambda_k.
\end{align}
Keys are ranked in decreasing order of $s_{\mathrm{ITI}}(\lambda_q,\lambda_k)$.

Thus, ITI uses the anchor set only to select a single best-matching text proxy for the query, after which retrieval is performed by standard image-text cosine similarity.

\subsection{Evaluation metric}
\label{app:evaluation_metric}

We evaluate retrieval performance with the area under the precision--recall curve (AUPRC). For a given query, let
\begin{align*}
\mathcal G = \{x_1,\dots,x_N\}
\end{align*}
denote the key set, let $s_i$ be the retrieval score assigned to key item $x_i$, and let
\begin{align*}
y_i \in \{0,1\}
\end{align*}
indicate whether $x_i$ is semantically relevant to the query. Sorting the keys by score induces a ranked list, and for any threshold $t$ on the score we define
\begin{align}
\operatorname{Precision}(t)
&=
\frac{\sum_{i=1}^N \mathbf 1\{s_i \ge t\}y_i}
{\sum_{i=1}^N \mathbf 1\{s_i \ge t\}}, \\
\operatorname{Recall}(t)
&=
\frac{\sum_{i=1}^N \mathbf 1\{s_i \ge t\}y_i}
{\sum_{i=1}^N y_i}.
\end{align}
The precision--recall curve is the set of points
\begin{align*}
\bigl(\operatorname{Recall}(t),\operatorname{Precision}(t)\bigr)
\end{align*}
obtained as the threshold varies over the ranked list, and AUPRC is the area under this curve:
\begin{align}
\operatorname{AUPRC}
=
\int_0^1 \operatorname{Precision}(r)\,dr,
\end{align}
with the discrete implementation given by the corresponding finite-sum approximation over ranked thresholds.

\subsection{Parametric models and adaptation}
\label{app:parametric_impl}

\paragraph{Gaussian model.}
We fit the Gaussian anchor model by maximum likelihood on the anchor embeddings.

\paragraph{Low-rank models.}
For the latent Gaussian and latent spherical models, we estimate the semantic subspace $U$ by the top-$k$ eigenvectors of the centered anchor covariance matrix. Unless otherwise specified, we choose $k$ to explain $95\%$ of the anchor covariance. In the semantic adaptation experiments, where a fixed latent dimension is required, we use $k=128$. For the ambient surrogate used in latent Gaussian adaptation, we set the isotropic regularization to $\varepsilon=10^{-8}$.

\paragraph{Gaussian mixture model.}
We fit the Gaussian mixture model by expectation--maximization, with the number of mixture components specified in the corresponding plots. EM is initialized with $k$-means, each covariance matrix is regularized by adding $10^{-5}I$, and optimization is terminated when the improvement in log-likelihood falls below $10^{-3}$ or when 200 iterations are reached.

\paragraph{Adaptation.}
Gaussian adaptation uses the Normal--Inverse-Wishart update in \Cref{app:adapt_gaussian}. Latent Gaussian adaptation uses the matrix-Bingham update in \Cref{app:adapt_latent_gaussian}. Latent spherical adaptation uses the posterior-like Bingham update in \Cref{app:adapt_latent_spherical}. Gaussian mixture adaptation uses the Dirichlet--NIW update in \Cref{app:adapt_gmm}.

\paragraph{Adaptation hyperparameters.}
For Gaussian adaptation, we use $\kappa_0=5$ and $\nu_0=d + 20$. The low-rank Bingham updates have prior strength $\alpha=50$ and $\varepsilon=10^{-8}$. For GMM adaptation, we use Dirichlet concentration $\alpha=1$, $\kappa_{0,m}=5$, and $\nu_{0,m}=d + 20$, shared across mixture components unless otherwise specified.

\subsection{CheXpert anchor generation}
\label{app:chexpert_anchors}

For CheXpert, we generate pathology-specific report-style anchors with ChatGPT 5.4 Thinking. The prompt asks for a Python dictionary whose keys are the CheXpert conditions and whose values are lists of possible doctor report notes for the corresponding condition. Each list contains both positive and negative entries and is sized to contain roughly 50 strings. The resulting pathology-specific anchor lists are then embedded with CheXzero and used as the anchor sets defining the corresponding retrieval geometries.

\paragraph{Prompt.}
\begin{quote}
For the conditions in LABELS = [``Enlarged Cardiomediastinum'', ``Cardiomegaly'', ``Lung Opacity'', ``Lung Lesion'', ``Edema'', ``Consolidation'', ``Pneumonia'', ``Atelectasis'', ``Pneumothorax'', ``Pleural Effusion'', ``Pleural Other'', ``Fracture'', ``Support Devices''] Give a python dictionary where the keys are the different conditions and the values are a list of strings of different possible doctor report notes for that condition. Have positive and negative entries. Have each list be of size around 50
\end{quote}

\paragraph{Usage in retrieval.}
For each target pathology, we extract the corresponding list from the generated dictionary, embed those report-style strings with CheXzero, and use the resulting text embeddings as the pathology-specific anchor set for retrieval.

\section{Additional Experiments}
\label{app:additional_experiments}

\subsection{Parametric anchor models on CheXpert}
\label{app:chexpert_parametric_models}

The main text reports CheXpert results for the pathology-specialized softmax anchor divergence. Here we include the corresponding parametric anchor models on the same retrieval task. \Cref{tab:chexpert_all_diffs_cosine} shows that Gaussian anchor divergence performs comparably to softmax on average, while the latent models also improve over cosine, indicating that parametric anchor geometries can preserve clinically relevant coordinates beyond the natural-image setting.

\begin{table}[t]

\centering
\small
\setlength{\tabcolsep}{4pt}
\renewcommand{\arraystretch}{0.98}

\caption{
\textbf{Parametric CheXpert anchor divergences.}
Entries are pathology-wise mean AUPRC differences relative to cosine similarity, with 95\% confidence interval.
Results average over independently generated anchor sets and CheXzero checkpoints, with intervals accounting for bootstrap and anchor-generation variability.
Softmax and Gaussian anchor divergences give the largest average gains, while latent models also improve over cosine on average.
}
\label{tab:chexpert_all_diffs_cosine}

\vspace{0.5em}

\resizebox{\textwidth}{!}{
\begin{tabular}{lccccc}
\toprule
\textbf{Pathology}
&
\textbf{ITI $-$ Cosine}
&
\textbf{Softmax $-$ Cosine}
&
\textbf{Gaussian $-$ Cosine}
&
\textbf{Latent G. $-$ Cosine}
&
\textbf{Latent S. $-$ Cosine}
\\
\midrule

Enlarged Cardiomediastinum
& \neu{$0.0042 \pm 0.0009$}
& \neu{$-0.0002 \pm 0.0003$}
& \neu{$-0.0002 \pm 0.0003$}
& \neu{$0.0000 \pm 0.0002$}
& \neu{$-0.0004 \pm 0.0003$}
\\

Cardiomegaly
& \pos{$0.0211 \pm 0.0083$}
& \pos{$0.0279 \pm 0.0034$}
& \pos{$0.0327 \pm 0.0030$}
& \pos{$0.0152 \pm 0.0015$}
& \pos{$0.0194 \pm 0.0020$}
\\

Lung Opacity
& \neu{$-0.0047 \pm 0.0050$}
& \pos{$0.0090 \pm 0.0032$}
& \pos{$0.0092 \pm 0.0031$}
& \neu{$0.0037 \pm 0.0016$}
& \pos{$0.0083 \pm 0.0019$}
\\

Lung Lesion
& \neu{$-0.0050 \pm 0.0011$}
& \neu{$0.0001 \pm 0.0005$}
& \neu{$-0.0000 \pm 0.0004$}
& \neu{$0.0003 \pm 0.0003$}
& \neu{$0.0013 \pm 0.0004$}
\\

Edema
& \neu{$0.0030 \pm 0.0064$}
& \pos{$0.0316 \pm 0.0044$}
& \pos{$0.0309 \pm 0.0045$}
& \pos{$0.0127 \pm 0.0024$}
& \pos{$0.0194 \pm 0.0027$}
\\

Consolidation
& \neu{$-0.0011 \pm 0.0020$}
& \pos{$0.0062 \pm 0.0014$}
& \pos{$0.0056 \pm 0.0012$}
& \neu{$0.0026 \pm 0.0006$}
& \neu{$0.0020 \pm 0.0007$}
\\

Pneumonia
& \neu{$-0.0037 \pm 0.0017$}
& \pos{$0.0081 \pm 0.0013$}
& \pos{$0.0072 \pm 0.0010$}
& \neu{$0.0038 \pm 0.0005$}
& \neu{$0.0035 \pm 0.0007$}
\\

Atelectasis
& \neu{$-0.0018 \pm 0.0030$}
& \pos{$0.0095 \pm 0.0027$}
& \pos{$0.0101 \pm 0.0025$}
& \neu{$0.0029 \pm 0.0012$}
& \pos{$0.0058 \pm 0.0017$}
\\

Pneumothorax
& \neu{$-0.0048 \pm 0.0020$}
& \pos{$0.0080 \pm 0.0013$}
& \pos{$0.0090 \pm 0.0013$}
& \neu{$0.0018 \pm 0.0005$}
& \neu{$0.0005 \pm 0.0005$}
\\

Pleural Effusion
& \pos{$0.0137 \pm 0.0083$}
& \pos{$0.0465 \pm 0.0060$}
& \pos{$0.0441 \pm 0.0055$}
& \pos{$0.0194 \pm 0.0030$}
& \pos{$0.0228 \pm 0.0030$}
\\

Pleural Other
& \neu{$-0.0034 \pm 0.0006$}
& \neu{$-0.0009 \pm 0.0003$}
& \neu{$-0.0009 \pm 0.0003$}
& \neu{$-0.0004 \pm 0.0002$}
& \neu{$0.0006 \pm 0.0003$}
\\

Fracture
& \neu{$-0.0027 \pm 0.0005$}
& \neu{$0.0003 \pm 0.0002$}
& \neu{$0.0003 \pm 0.0002$}
& \neu{$0.0003 \pm 0.0001$}
& \neu{$0.0016 \pm 0.0003$}
\\

Support Devices
& \nege{$-0.0158 \pm 0.0079$}
& \pos{$0.0208 \pm 0.0056$}
& \pos{$0.0277 \pm 0.0050$}
& \pos{$0.0126 \pm 0.0026$}
& \pos{$0.0183 \pm 0.0030$}
\\

\midrule

\textbf{Average}
& \neu{$-0.0001 \pm 0.0012$}
& \pos{$0.0128 \pm 0.0011$}
& \pos{$0.0135 \pm 0.0008$}
& \pos{$0.0058 \pm 0.0005$}
& \pos{$0.0079 \pm 0.0006$}
\\

\bottomrule
\end{tabular}
}
\end{table}

\subsection{Parametric models across anchor budgets on additional datasets}
\label{app:additional_anchor_budgets}

The main text studies how empirical and parametric anchor models behave as the number of available anchors varies on ImageNet. We observe the same qualitative tradeoff on ImageNet-R and CUB-200. As shown in \Cref{fig:additional_anchor_budgets}, parametric anchor divergences are strongest over much of the limited- and moderate-anchor regime, while softmax improves steadily with anchor coverage and becomes strongest when the full class-anchor set is available. Thus, the anchor-coverage tradeoff observed on ImageNet is not specific to that dataset.

\begin{figure}[t]
    \centering
    \includegraphics[width=\linewidth]
    {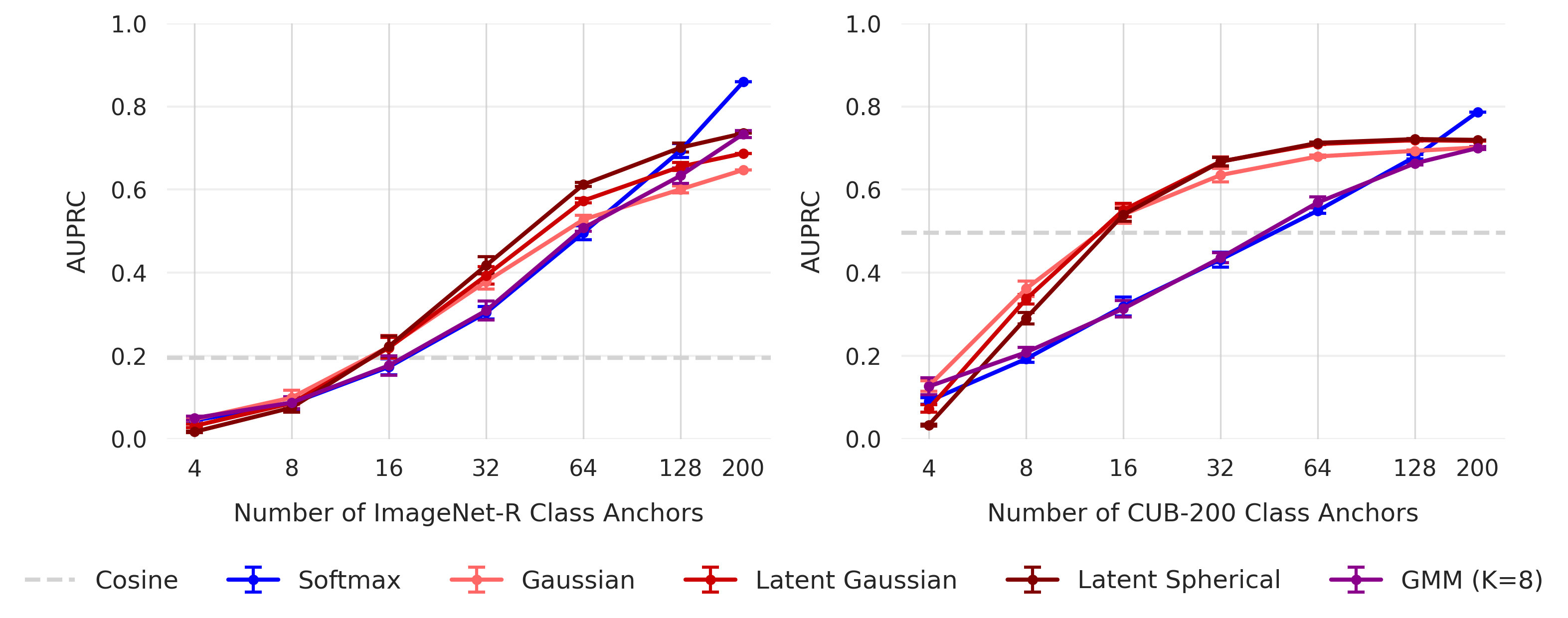}

    \caption{
    \textbf{Parametric structure is useful when anchor coverage is limited.}
    AUPRC as the number of class-name anchors varies on
    \textbf{ImageNet-R} (left) and \textbf{CUB-200} (right).
    Parametric anchor divergences are strongest across much of the limited- and
    moderate-anchor regime, while softmax improves with anchor coverage and is
    strongest when the full class-anchor set is available.
    }

    \label{fig:additional_anchor_budgets}
\end{figure}

\subsection{Exploratory model-complexity diagnostic}
\label{app:exp_model_appropriateness}

We include an exploratory diagnostic on how mixture complexity interacts with anchor-distribution structure. In the anchor-distribution view, parametric models are structured approximations to $q$: increasing flexibility can reduce approximation bias when the anchor distribution is heterogeneous, but can also introduce unnecessary complexity or estimation noise when simpler structure is sufficient.

We compare ImageNet and CUB-200 as illustrative regimes. ImageNet class-name anchors span many coarse semantic groups, while CUB-200 anchors are narrower and concentrated around fine-grained bird categories. \Cref{fig:gmm_model_choice} shows different trends across these datasets: increasing the number of mixture components tends to help on ImageNet, while gains are smaller and less consistent on CUB-200. This diagnostic is not intended as a complete model-selection study, but illustrates that parametric anchor models introduce structure whose usefulness depends on the structure of the anchor set.

\begin{figure}[t]
    \centering
    \includegraphics[width=\linewidth]{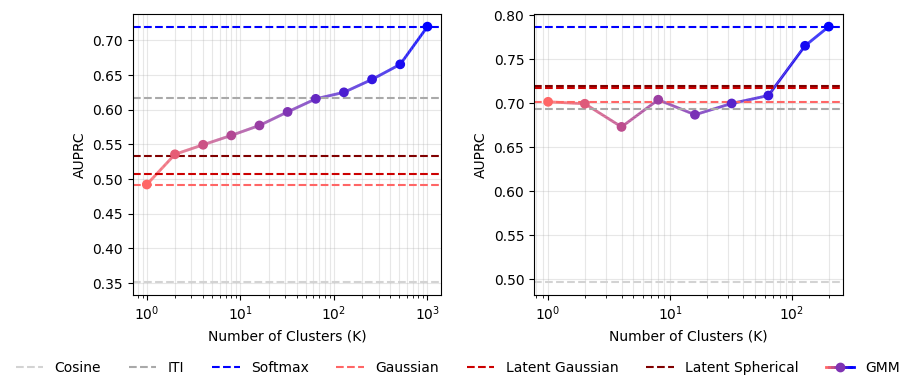}
    \caption{\textbf{Mixture complexity behaves differently across anchor sets.} On ImageNet, increasing the number of GMM components generally improves retrieval, consistent with a heterogeneous class-name anchor distribution. On CUB-200, gains are smaller and less consistent, suggesting that simpler structured geometries may already capture much of the relevant anchor variation.}
    \label{fig:gmm_model_choice}
\end{figure}

\subsection{Computational scaling with anchor-set size}
\label{app:runtime}

Parametric anchor models also provide a computational advantage when the number of observed anchors is large. Let $Q$ denote the number of queries, $G$ the number of gallery items, $N$ the number of anchors, and $d$ the
embedding dimension. 
A direct implementation of softmax anchor divergence constructs an $N$-dimensional distribution for each image, giving computational cost $O(QGN + (Q + G)Nd)$.

For the symmetrized divergence used in our experiments, the subsequent query--gallery comparison can be computed exactly from cached $d$-dimensional statistics. Writing
$\eta=\lambda/\tau$ and
$\mu(\lambda)=\nabla\kappa_{\hat q_\Gamma}(\eta)$, we have
\begin{equation}
    d_{\mathrm{sym}}(\lambda_q,\lambda_g) = \frac{1}{2}(\eta_q-\eta_g)^\top\bigl(\mu(\lambda_q)-\mu(\lambda_g)\bigr).
\end{equation}
Thus, after caching $\mu(\lambda)$, softmax anchor divergence has total cost
$$
    O((Q+G)Nd + QGd),
$$
so dependence on $N$ remains in the gallery and query dependent preprocessing step. 
In contrast, a fitted parametric anchor model summarizes the observed anchors by a fixed set of parameters. Once this fitting step is complete, query and gallery preprocessing and pairwise evaluation need not depend on the original anchor-set size.

\begin{table}[t]
\centering
\caption{
\textbf{Runtime scaling with anchor-set size.}
Wall-clock retrieval time (in seconds) on a single NVIDIA H100 NVL GPU for 10,000 queries and 50,000 gallery images.
Runtimes include fitting, preprocessing, and retrieval and are averaged over 10 runs following 3 warm-up runs.\\
}
\label{tab:runtime_scaling}

\begin{tabular}{lccccc}
\toprule
\textbf{Method}
& $\mathbf{2^{10}}$
& $\mathbf{2^{12}}$
& $\mathbf{2^{14}}$
& $\mathbf{2^{16}}$
& $\mathbf{2^{17}}$ \\
\midrule
Cosine
& 0.152
& 0.152
& 0.152
& 0.151
& 0.152 \\
Softmax
& 0.216
& 0.281
& 0.538
& 1.622
& 3.066 \\
Gaussian
& 0.364
& 0.363
& 0.367
& 0.370
& 0.375 \\
\bottomrule
\end{tabular}
\end{table}

We examine this empirically by varying the number of anchors and timing retrieval for 10,000 queries against 50,000 gallery image embeddings on a single NVIDIA H100 NVL GPU. Runtimes include model fitting, and retrieval, and are averaged over 10 runs following 3 warm-up runs.
For softmax anchor divergence, anchors are streamed in chunks when computing the cached statistics, so additional peak GPU memory does not grow substantially with $N$.
\Cref{tab:runtime_scaling} shows that softmax runtime increases with anchor-set size as anchor-dependent preprocessing becomes appreciable, whereas Gaussian anchor divergence remains nearly constant after the anchor distribution has been fit.

\subsection{Qualitative object--style retrieval examples}
\label{app:qualitative_semantic_control}

\Cref{fig:qualitative_semantic_control} shows additional object- and style-anchor retrieval examples. Object anchors retrieve images with the same object across styles, while style anchors retrieve images with the same style across object categories. This pattern appears for both empirical softmax and parametric anchor divergences, suggesting that parametric models preserve the semantic coordinate selected by the anchors.

\begin{figure}[h]
    \centering
    \includegraphics[width=0.9\textwidth]{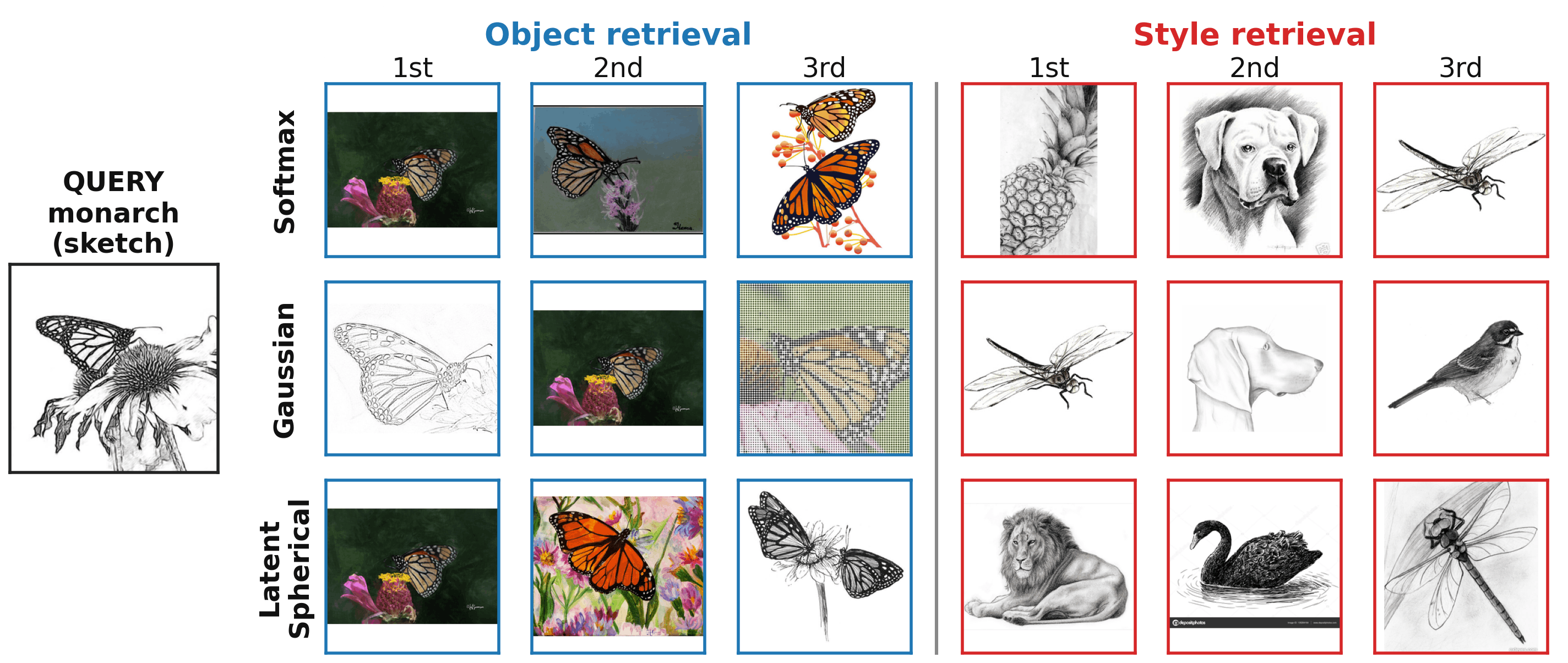}
    \includegraphics[width=0.9\textwidth]{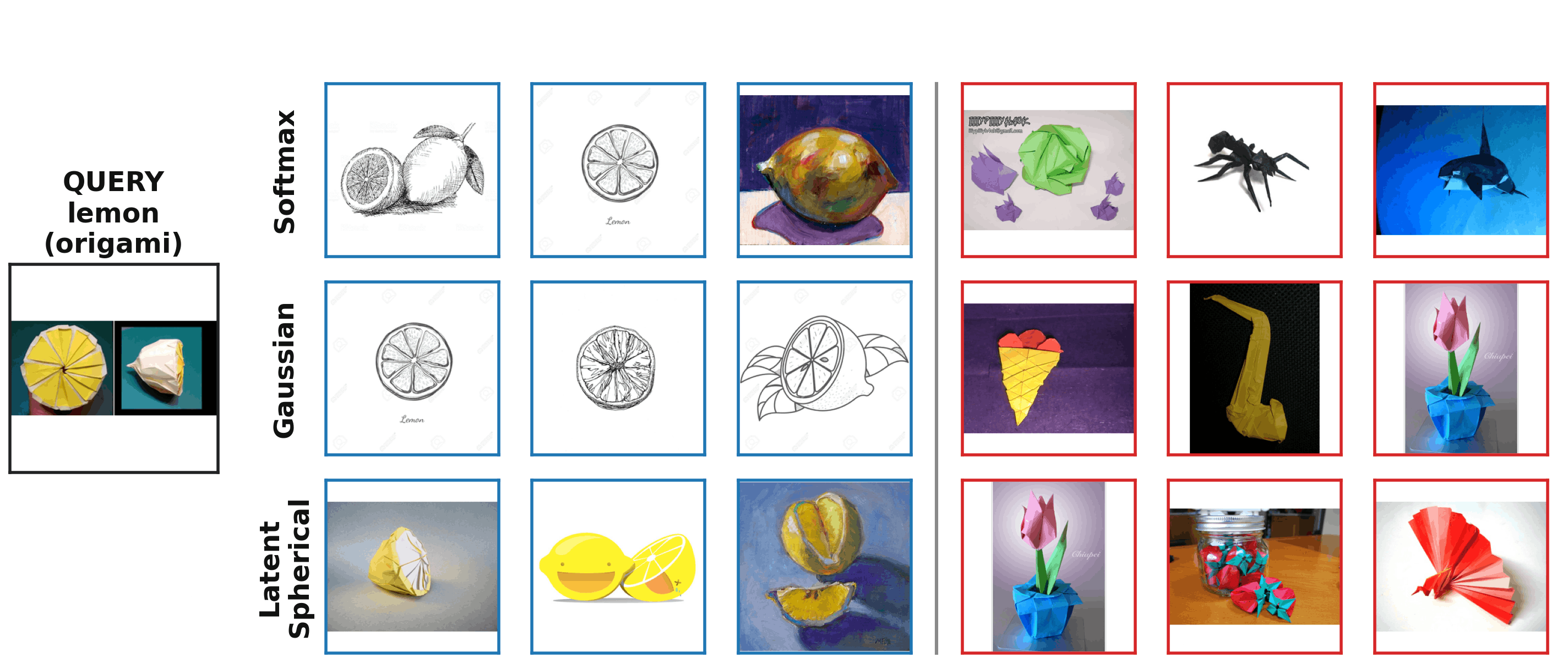}
    \includegraphics[width=0.9\textwidth]{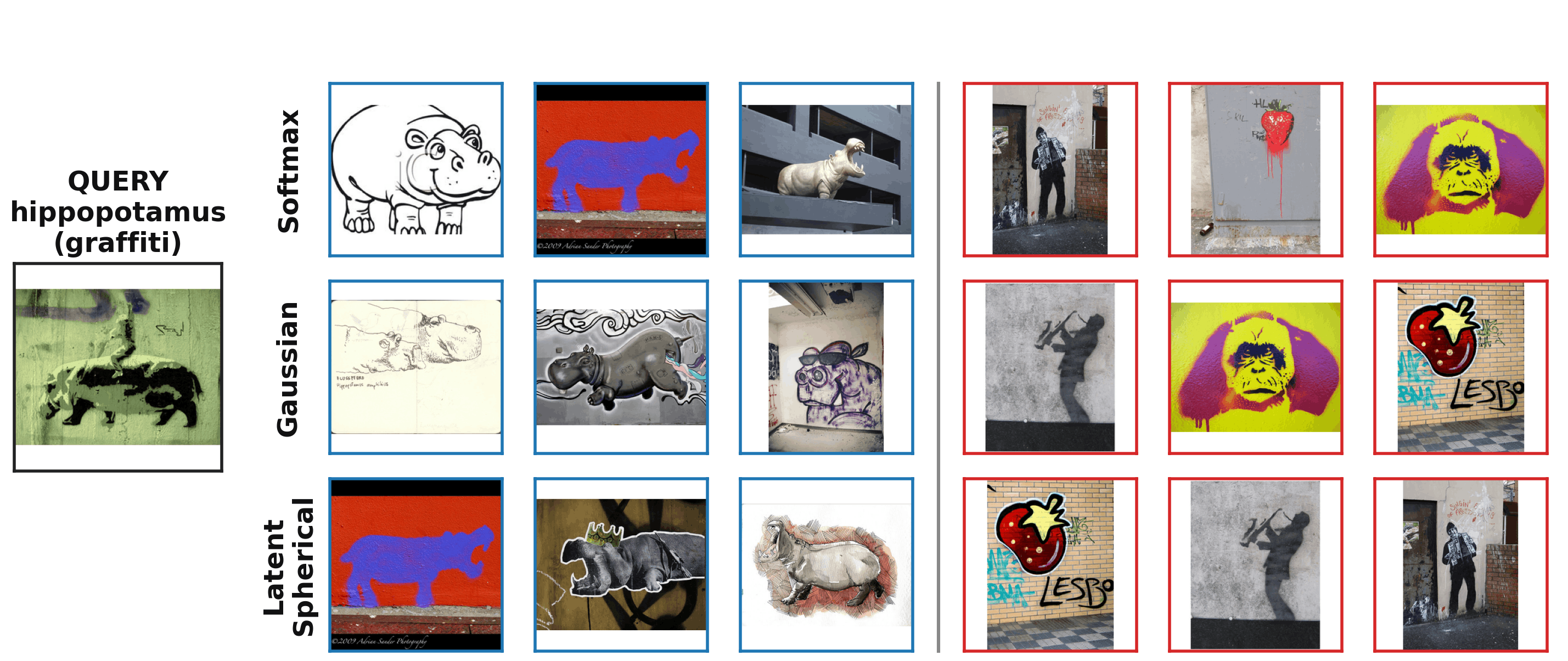}
    \caption{\textbf{Qualitative examples of anchor-controlled retrieval.} Blue boxes show object-anchor retrieval, which prioritizes object identity across styles; red boxes show style-anchor retrieval, which prioritizes style across object categories. Rows compare some of the different anchor divergences.}
    \label{fig:qualitative_semantic_control}
\end{figure}

\end{document}

%% file: preamble/preamble.tex
\usepackage[utf8]{inputenc} 
\usepackage[T1]{fontenc}    

\usepackage[bitstream-charter]{mathdesign}
\usepackage{amsmath}
\usepackage[scaled=0.92]{PTSans}

\usepackage[
  paper  = letterpaper,
  left   = 1.65in,
  right  = 1.65in,
  top    = 1.0in,
  bottom = 1.0in,
  ]{geometry}

\usepackage[usenames,dvipsnames,table]{xcolor}
\definecolor{shadecolor}{gray}{0.9}

\usepackage[final,expansion=alltext]{microtype}
\usepackage[english]{babel}
\usepackage[parfill]{parskip}
\usepackage{afterpage}
\usepackage{framed}

{\endMakeFramed}

\usepackage{lineno}

\usepackage{ragged2e}

\newcounter{parcount}

\usepackage{graphicx}
\usepackage{wrapfig}
\usepackage[labelfont=bf,font=footnotesize,width=.9\textwidth]{caption}
\usepackage[format=hang]{subcaption}

\usepackage{booktabs,multirow,multicol}       

\usepackage[algoruled,algo2e]{algorithm2e}
\usepackage{listings}
\usepackage{fancyvrb}
\fvset{fontsize=\normalsize}

\usepackage[colorlinks,linktoc=all]{hyperref}
\usepackage[all]{hypcap}
\hypersetup{citecolor=MidnightBlue}
\hypersetup{linkcolor=MidnightBlue}
\hypersetup{urlcolor=MidnightBlue}

\usepackage[nameinlink]{cleveref}

\usepackage[acronym,nowarn]{glossaries}

\lstdefinestyle{mystyle}{
    commentstyle=\color{OliveGreen},
    keywordstyle=\color{BurntOrange},
    numberstyle=\tiny\color{black!60},
    stringstyle=\color{MidnightBlue},
    basicstyle=\ttfamily,
    breakatwhitespace=false,
    breaklines=true,
    captionpos=b,
    keepspaces=true,
    numbers=left,
    numbersep=5pt,
    showspaces=false,
    showstringspaces=false,
    showtabs=false,
    tabsize=2
}

%% file: preamble/preamble_math.tex
\usepackage{amsthm}

\usepackage{centernot}
\usepackage{nicefrac}       
\usepackage{mathtools}
\usepackage{amsbsy}
\usepackage{amstext}
\usepackage{thmtools}
\usepackage{thm-restate}

\begingroup
    \makeatletter
    \@for\theoremstyle:=definition,remark,plain\do{%
        \expandafter\g@addto@macro\csname th@\theoremstyle\endcsname{%
            \addtolength\thm@preskip\parskip
            }%
        }
\endgroup

\crefname{lemma}{lemma}{lemmas}
\Crefname{lemma}{Lemma}{Lemmas}
\crefname{thm}{theorem}{theorems}
\Crefname{thm}{Theorem}{Theorems}
\crefname{prop}{proposition}{propositions}
\Crefname{prop}{Proposition}{Propositions}
\crefname{assumption}{assumption}{assumptions}
\crefname{assumption}{Assumption}{Assumptions}

\renewcommand{\mid}{~\vert~}

\usepackage{booktabs,arydshln}
\makeatletter
\def\adl@drawiv#1#2#3{%
        \hskip.5\tabcolsep
        \xleaders#3{#2.5\@tempdimb #1{1}#2.5\@tempdimb}%
                #2\z@ plus1fil minus1fil\relax
        \hskip.5\tabcolsep}
\newcommand{\cdashlinelr}[1]{%
  \noalign{\vskip\aboverulesep
           \global\let\@dashdrawstore\adl@draw
           \global\let\adl@draw\adl@drawiv}
  \cdashline{#1}
  \noalign{\global\let\adl@draw\@dashdrawstore
           \vskip\belowrulesep}}
\makeatother

\renewcommand{\epsilon}{\varepsilon}

\declaretheorem[style=plain,name=Theorem]{theorem}

\declaretheorem[style=definition,sibling=theorem,name=Definition]{definition}

\declaretheorem[style=definition,sibling=theorem,name=Example]{example}

\newenvironment{example*}
 {\pushQED{\qed}\example}
 {\popQED\endexample}
\numberwithin{equation}{section}

%% file: preamble/definitions_basic.tex
\DeclarePairedDelimiterX\Set[1]{\lbrace}{\rbrace}%
{  #1 }



%% file: preamble/minimalist_biblatex.tex
\usepackage{csquotes}
\usepackage[%
minnames=1,maxnames=99,maxcitenames=2,
style=alphabetic,
doi=false,
url=false,
giveninits=true,
hyperref,
natbib,
backend=bibtex,
sorting=nyt,
backref=true
]{biblatex}%
\renewbibmacro{in:}{%
  \ifentrytype{article}{}{\printtext{\bibstring{in}\intitlepunct}}}

\renewbibmacro*{journal}{%
  \iffieldundef{journaltitle}
    {}
    {\printtext[journaltitle]{%
       \printfield[noformat]{journaltitle}%
       \setunit{\subtitlepunct}%
       \printfield[noformat]{journalsubtitle}}}}

\DeclareFieldFormat{sentencecase}{\MakeSentenceCase*{#1}}

\renewbibmacro*{title}{%
  \ifthenelse{\iffieldundef{title}\AND\iffieldundef{subtitle}}
    {}
    {\ifthenelse{\ifentrytype{article}\OR\ifentrytype{inbook}%
      \OR\ifentrytype{incollection}\OR\ifentrytype{inproceedings}%
      \OR\ifentrytype{inreference}\OR\ifentrytype{misc}}
      {\printtext[title]{%
        \printfield[sentencecase]{title}%
        \setunit{\subtitlepunct}%
        \printfield[sentencecase]{subtitle}}}%
      {\printtext[title]{%
        \printfield[titlecase]{title}%
        \setunit{\subtitlepunct}%
        \printfield[titlecase]{subtitle}}}%
     \newunit}%
  \printfield{titleaddon}}

\AtEveryBibitem{%
\ifentrytype{article}{
    \clearfield{urldate}%
    \clearfield{eprint}
    \clearfield{eid}
}{}
\ifentrytype{book}{
    \clearfield{url}%
    \clearfield{urldate}%
    \clearfield{eprint}
}{}
\ifentrytype{collection}{
    \clearfield{url}%
    \clearfield{urldate}%
    \clearfield{eprint}
}{}
\ifentrytype{incollection}{
    \clearfield{url}%
    \clearfield{urldate}%
    \clearfield{eprint}
}{}
}

\AtEveryBibitem{
    \clearfield{pages}
    \clearfield{review}%
    \clearfield{series}
    \clearfield{volume}
    \clearfield{month}
    \clearfield{isbn}
    \clearfield{issn}
    \clearlist{location}
    \clearfield{series}
    \clearlist{publisher}
    \clearname{editor}
}{}